\definecolor{mydarkblue}{rgb}{0,0.08,0.45}
\title{Quantifying Learning Guarantees for Convex but Inconsistent Surrogates}
\author{
    Kirill Struminsky\\
    NRU HSE,\thanks{National Research University Higher School of Economics}\; Moscow, Russia\\
   \And
   Simon Lacoste-Julien\thanks{CIFAR Fellow} \\
    MILA and DIRO \\
   Universit\'{e} de Montr\'{e}al, Canada
   \And
   Anton Osokin\\
   NRU HSE,\footnotemark[1]\:\,\thanks{Samsung-HSE Joint Lab}\; Moscow, Russia\\
   Skoltech,\thanks{Skolkovo Institute of Science and Technology}\; Moscow, Russia
}
\begin{document}

\maketitle

\begin{abstract}
We study consistency properties of machine learning methods based on minimizing convex surrogates. We extend the recent framework of~\citet{osokin17consistency} for the quantitative analysis of consistency properties to the case of inconsistent surrogates. Our key technical contribution consists in a new lower bound on the calibration function for the quadratic surrogate, which is non-trivial (not always zero) for inconsistent cases. The new bound allows to quantify the level of inconsistency of the setting and shows how learning with inconsistent surrogates can have guarantees on sample complexity and optimization difficulty. We apply our theory to two concrete cases: multi-class classification with the tree-structured loss and ranking with the mean average precision loss. The results show the approximation-computation trade-offs caused by inconsistent surrogates and their potential benefits.
\end{abstract}

\section{Introduction} \label{sec:intro}
Consistency is a desirable property of any statistical estimator, which informally means that in the limit of infinite data, the estimator converges to the correct quantity.
In the context of machine learning algorithms based on surrogate loss minimization, we usually use the notion of Fisher consistency, which means that the exact minimization of the expected surrogate loss leads to the exact minimization of the actual task loss.
It can be shown that Fisher consistency is closely related to the question of infinite-sample consistency (a.k.a. classification calibration)  of the surrogate loss with respect to the task loss (see \cite{bartlett06convexity,ramaswamy16calibrDim} for a detailed review).

The property of infinite-sample consistency (which we will refer to as simply consistency) shows that the minimization of a particular surrogate is the right problem to solve, but it becomes especially attractive when one can actually minimize the surrogate, which is the case, e.g, when the surrogate is convex.
Consistency of convex surrogates has been the central question of many studies for such problems as binary classification~\cite{bartlett06convexity,zhang2004annals,steinwart07}, multi-class classification~\cite{zhang04,tewari07,pires2013riskbounds,ramaswamy16calibrDim}, ranking~\cite{duchi10,buffoni2011learning,calauzenes12,ramaswamy13rankSurrogates,pedregosa15ordivalreg} and, more recently, structured prediction~\cite{ciliberto16,osokin17consistency}.

Recently, \citet{osokin17consistency} have pinpointed that in some cases minimizing a consistent convex surrogate might be not sufficient for efficient learning.
In particular, when the number of possible predictions is large (which is typically the case in the settings of structured prediction and ranking) reaching adequately small value of the expected task loss can be practically impossible, because one would need to optimize the surrogate to high accuracy, which requires an intractable number of iterations of the optimization algorithm.

It also turns out~\cite{osokin17consistency} that the possibility of efficient learning is related to the structure of the task loss.
The 0-1 loss, which does not make distinction between different kinds of errors, shows the worst case behavior.
However, more structured losses, e.g., the Hamming distance between sequence labelings, allow efficient learning if the score vector is designed appropriately (for the Hamming distance, the score for a complete configuration should be decomposable into the sum of scores for individual elements).

However, the analysis of~\citet{osokin17consistency} gives non-trivial conclusions only for consistent surrogates.
At the same time it is known that inconsistent surrogates often work well in practice (for example, the Crammer-Singer formulation of multi-class SVM~\cite{crammer01}, or its generalization structured SVM~\cite{taskar03,tsochantaridis05}).
There have indeed been several works to analyze inconsistent surrogates~\cite{long2013consistency,ramaswamy13rankSurrogates,calauzenes12,osokin17consistency}, but they usually end the story with proving that some surrogate (or a family or surrogates) is not consistent.

\textbf{Contributions.} In this work, we look at the problem from a more quantitative angle and analyze to which extent inconsistent surrogates can be useful for learning.
We focus on the same setting as~\cite{osokin17consistency} and generalize their results to the case of inconsistent surrogates (their bounds are trivial for these cases) to be able to draw non-trivial conclusions.
The main technical contribution consists in a tighter lower bound on the calibration function (Theorem~\ref{th:lowerBoundCalibrationFunctionInConsistent}), which is strictly more general than the bound of~\cite{osokin17consistency}.
Notably, our bound is non-trivial in the case when the surrogate is not consistent and quantifies to which degree learning with inconsistent surrogates is possible.
We further study the behavior of our bound in two practical scenarios: multi-class classification with a tree-structured loss and ranking with the mean average precision (mAP) loss.
For the tree-structured loss, our bound shows that there can be a trade-off between the best achievable accuracy and the speed of convergence.
For the mAP loss, we use our tools to study the (non-)existence of consistent convex surrogates of a particular dimension (an important issue for the task of ranking~\cite{duchi10,buffoni2011learning,calauzenes12,ramaswamy13rankSurrogates,ramaswamy16calibrDim}) and quantify to which extent our quadratic surrogate with the score vector of insufficient dimension is consistent.

This paper is organized as follows.
First, we introduce the setting we work with in Section~\ref{sec:notation} and review the key results of~\cite{osokin17consistency} in Section~\ref{sec:notation:calibration}.
In Section~\ref{sec:lowerBound}, we prove our main theoretical result, which is a new lower bound on the calibration function.
In Section~\ref{sec:losses}, we analyze the behavior of our bound for the two different settings: multi-class classification and ranking (the mean average precision loss).
Finally, we review the related works and conclude in Section~\ref{sec:conclusion}.

\section{Notation and Preliminaries} \label{sec:notation}
In this section, we introduce our setting, which closely follows~\cite{osokin17consistency}.
We denote the input features by $\inputvarv \in \inputdomain$ where~$\inputdomain$ is the input domain.
The particular structure of~$\inputdomain$ is not of the key importance for this study.
The output variables, that are in the center of our analysis, will be denoted by~$\outputvarv \in \outputdomain$ with~$\outputdomain$ being the set of possible predictions or the output domain.\footnote{The output domain~$\outputdomain$ itself can depend on the vector of input features~$\inputvarv$ (for example, if $\inputvarv$ can represent sequences of different lengths and the length of the output sequence has to equal the length of the input), but we will not use this dependency and omit it for brevity.}
In such settings as structured prediction or ranking, the predictions are very high-dimensional and with some structure that is useful to model explicitly (for example, a sequence, permutation or image).

The central object of our study is the \emph{loss function} $\lossmatrix(\outputvarv,\gtvarv) \geq 0$ that represents the cost of making the prediction~$\outputvarv \in \outputdomain$ when the ground-truth label is~$\gtvarv \in \gtdomain$.
Note that in some applications of interest the sets~$\outputdomain$ and~$\gtdomain$ are different.
For example, in ranking with the mean average precision (mAP) loss function (see Section~\ref{sec:rankingLosses} and, e.g., \citep{ramaswamy13rankSurrogates} for the details), the set~$\outputdomain$ consists of all the permutations of the items (to represent the ranking itself), but the set~$\gtdomain$ consists of all the subsets of items (to represent the set of relevant items, which is the ground-truth annotation in this setting).
In this paper, we only study the case when both~$\outputdomain$ and~$\gtdomain$ are finite.
We denote the cardinality of $\outputdomain$ by $\outputvarcard$, and the cardinality of $\gtdomain$ by $\gtvarcard$.
In this case, the loss function can be encoded as a matrix~$\lossmatrix$ of size $\outputvarcard \times \gtvarcard$.

In many applications of interest, both quantities~$\outputvarcard$ and~$\gtvarcard$ are exponentially large in the size of the natural dimension of the input~$\inputvarv$.
For example, in the task of sequence labeling, both~$\outputvarcard$ and~$\gtvarcard$ are equal to the number of all possible sequences of symbols from a finite alphabet.
In the task of ranking (the mAP formulation), $\outputvarcard$ is equal to the number of permutations of items and $\gtvarcard$ is equal to the number of item subsets.

Following usual practices, we work with the prediction model defined by a (learned) vector-valued \emph{score function} $\scorefunc: \inputdomain \to \R^{\outputvarcard}$, which defines a scalar score $\scorefunc_\outputvarv(\inputvarv)$ for each possible output~$\outputvarv \in \outputdomain$.
The final prediction is then chosen as an output configuration with the maximal score:
\begin{equation}
\label{eq:predictor}
\predictor(\scorefunc(\inputvarv)) := \argmax_{\outputvarv \in \outputdomain} \scorefunc_{\outputvarv}(\inputvarv).
\end{equation}
If the maximal score is given by multiple outputs~$\outputvarv$ (so-called \emph{ties}), the predictor follows a simple deterministic tie-breaking rule and picks the output appearing first in some predefined ordering on $\outputdomain$.

In this setup, learning consists in finding a \emph{score function}~$\scorefunc$ for which the predictor gives the smallest expected loss with features~$\inputvarv$ and labels~$\gtvarv$ coming from an unknown data-generating distribution~$\data$:
\begin{equation}
\label{eq:risk}
\risk_\lossmatrix(\scorefunc) := \E_{(\inputvarv, \gtvarv) \sim \data} \;\lossmatrix\bigl( \predictor(\scorefunc(\inputvarv)), \gtvarv \bigr).
\end{equation}
The quantity~$\risk_\lossmatrix(\scorefunc)$ is usually referred to as the actual (or population) \emph{risk} based on the loss~$\lossmatrix$.
Minimizing the actual risk directly is usually difficult (because of non-convexity and non-continuity of the predictor~\eqref{eq:predictor}).
The standard approach is to substitute~\eqref{eq:risk} with another objective, a \emph{surrogate risk} (or the $\surrogateloss$-risk), which is easier for optimization (in this paper, we only consider convex surrogates):
\begin{equation}
\label{eq:surrogateRisk}
\risk_\surrogateloss(\scorefunc) := \E_{(\inputvarv, \gtvarv) \sim \data} \;\surrogateloss( \scorefunc(\inputvarv), \gtvarv ),
\end{equation}
where we will refer to the function $\surrogateloss: \R^\outputvarcard \times \gtdomain \to \R$  as the~\emph{surrogate loss}.
To make the minimization of~\eqref{eq:surrogateRisk} well-defined, we will always assume the surrogate loss~$\surrogateloss$ to be bounded from below and continuous.

The surrogate loss should be chosen in such a way that the minimization of~\eqref{eq:surrogateRisk} also leads to the minimization of~\eqref{eq:risk}, i.e., to the solution of the original problem.
The property of consistency of the surrogate loss is an approach to formalize this intuition, i.e., to guarantee that no matter the data-generating distribution, minimizing~\eqref{eq:surrogateRisk} w.r.t.~$\scorefunc$ implies minimizing~\eqref{eq:risk} w.r.t.~$\scorefunc$ as well (both of these are possible only in the limit of infinite data and computational budget).
\citet{osokin17consistency} quantified what happens if the surrogate risk is minimized approximately by translating the optimization error of~\eqref{eq:surrogateRisk} to the optimization error of~\eqref{eq:risk}.
The main goal of this paper is to generalize this analysis to the cases when the surrogate is not consistent and to show that there can be trade-offs between the minimum value of the actual risk that can be achieved by minimizing an inconsistent surrogate and the speed with which this minimum can be achieved.

\section{Calibration Functions and Consistency} \label{sec:notation:calibration}
In this section, we review the approach of~\citet{osokin17consistency} for studying consistency in the context of structured prediction.
The first part of the analysis establishes the connection between the minimization of the actual risk~$\risk_\lossmatrix$~\eqref{eq:risk} and the surrogate risk $\risk_\surrogateloss$~\eqref{eq:surrogateRisk} via the so-called \emph{calibration function} (see Definition~\ref{def:calibrationFunc}~\cite[and references therein]{osokin17consistency}).
This step is usually called \emph{non-parametric} (or pointwise) because it does not explicitly model the dependency of the scores~$\scorev := \scorefunc(\inputvarv)$ on the input variables~$\inputvarv$.
The second part of the analysis establishes the connection with an optimization algorithm allowing to make a statement about how many iterations would be enough to find a predictor that is (in expectation) within $\eps$ of the global minimum of the actual risk~$\risk_\lossmatrix$.

\textbf{Non-parametric analysis.}
The standard non-parametric setting considers all measurable score functions~$\scorefunc$ to effectively ignore the dependency on the features~$\inputvarv$.
As noted by~\cite{osokin17consistency}, it is beneficial to consider a restricted set of the score functions~$\scorefuncset_\scoresubset$ that consists of all vector-valued Borel measurable functions $\scorefunc: \inputdomain \to \scoresubset$ where $\scoresubset \subseteq \R^\outputvarcard$ is a subspace of allowed score vectors.
Compatibility of the subspace~$\scoresubset$ and the loss function~$\lossmatrix$ will be a crucial point of this paper.
Note that the analysis is still non-parametric because the dependence on~$\inputvarv$ is not explicitly modeled.

Within the analysis, we will use the \emph{conditional} actual and surrogate risks defined as the expectations of the corresponding losses w.r.t.\ a categorical distribution~$\qv$ on the set of annotations~$\gtdomain$, $\gtvarcard := |\gtdomain|$:
\begin{equation}
\label{eq:conditionalRisks}
\lossweighted(\scorev, \qv)
:=
\sum\nolimits_{\gtvarv=1}^\gtvarcard q_\gtvarv \lossmatrix( \predictor(\scorev), \gtvarv ),
\quad
\surrogateweighted(\scorev, \qv)
:=
\sum\nolimits_{\gtvarv=1}^\gtvarcard q_\gtvarv \surrogateloss( \scorev, \gtvarv ).
\end{equation}
Hereinafter, we represent an $\gtvarcard$-dimensional  categorical distribution $\qv$ as a point in the probability simplex~$\simplex_\gtvarcard$ and use the symbol~$q_\gtvarv$ to denote the probability of the $\gtvarv$-th outcome.
Using this notation, we can rewrite the risk~$\risk_\lossmatrix$ and surrogate risk $\risk_\surrogateloss$ as
\begin{equation}
\label{eq:risks}
\risk_\lossmatrix(\scorefunc)
=
\E_{\inputvarv \sim \data_{\inputdomain}} \; \lossweighted(\scorefunc(\inputvarv), \P_\data(\cdot \mid \inputvarv)),
\quad
\risk_\surrogateloss(\scorefunc)
=
\E_{\inputvarv \sim \data_{\inputdomain}}\;\surrogateweighted(\scorefunc(\inputvarv), \P_\data(\cdot \mid \inputvarv)),
\end{equation}
where $\data_{\inputdomain}$ is the marginal distribution of $\inputvarv$ and $\P_\data(\cdot \:| \: \inputvarv)$ denotes the conditional distribution of $\gtvarv$ given $\inputvarv$ (both defined for the joint data-generating distribution $\data$).

For each score vector~$\scorev \in \scoresubset$ and a distribution~$\qv \in \simplex_\gtvarcard$ over ground-truth labels, we now define the \emph{excess} actual and surrogate risks
\begin{equation}
\label{eq:excessRisks}
\excess\surrogateweighted(\scorev, \qv)
=
\surrogateweighted(\scorev, \qv) - \inf_{\hat{\scorev} \in \scoresubset} \surrogateweighted(\hat{\scorev}, \qv), \quad
\excess\lossweighted(\scorev, \qv)
=
\lossweighted(\scorev, \qv) - \inf_{\hat{\scorev} \in \R^{\outputvarcard}} \lossweighted(\hat{\scorev}, \qv),
\end{equation}
which show how close the current conditional actual and surrogate risks are to the corresponding minimal achievable conditional risks (depending only on the distribution~$\qv$).
Note that the two infima in~\eqref{eq:excessRisks} are defined w.r.t.\ different sets of score vectors.
For the surrogate risk, the infimum is taken w.r.t.\ the set of allowed scores~$\scoresubset$ capturing only the scores obtainable by the learning process.
For the actual risk, the infimum is taken w.r.t.\ the set of all possible scores~$\R^\outputvarcard$ including score vectors that cannot be learned.
This distinction is important when analyzing inconsistent surrogates and allows to characterize the \emph{approximation error} of the selected function class.\footnote{Note that~\citet{osokin17consistency} define the excess risks by taking both infima w.r.t.\ the the set of allowed scores~$\scoresubset$, which is subtly different from us. The results of the two setups are equivalent in the cases of consistent surrogates, which are the main focus of~\citet{osokin17consistency}, but can be different in inconsistent cases.}

We are now ready to define the \emph{calibration function}, which is the final object of the non-parametric part of the analysis.
Calibration functions directly show how well one needs to minimize the surrogate risk to guarantee that the excess of the actual risk is smaller than~$\eps$.
\begin{definition}[Calibration function, \cite{osokin17consistency}]
    \label{def:calibrationFunc}
    For a task loss~$\lossmatrix$, a surrogate loss~$\surrogateloss$, a set of feasible scores~$\scoresubset$, the \emph{calibration function}~$\calibrationfunc_{\surrogateloss,\lossmatrix,\scoresubset}(\eps)$ is defined as:
    \begin{align}
    \label{eq:calibrationfunc}
    \calibrationfunc_{\surrogateloss,\lossmatrix,\scoresubset}(\eps)
    :=
    \;\;\;\;
    \inf_{\mathclap{\scorev \in \scoresubset, \;\qv \in \simplex_\gtvarcard}} \;&\;\;\;\;\excess\surrogateweighted(\scorev, \qv) \\
    \label{eq:calibrationfunc:epsConstr}
    \text{\textup{s.t.}} \:&\;\;\;\;\excess\lossweighted(\scorev, \qv) \geq \eps,
    \end{align}
    where $\eps \geq 0$ is the target accuracy.
    We set $\calibrationfunc_{\surrogateloss,\lossmatrix,\scoresubset}(\eps)$ to $+\infty$ when the feasible set is empty.
\end{definition}
By construction, $\calibrationfunc_{\surrogateloss,\lossmatrix,\scoresubset}$ is non-decreasing on $[0, +\infty)$, $\calibrationfunc_{\surrogateloss,\lossmatrix,\scoresubset}(\eps) \geq 0$ and $\calibrationfunc_{\surrogateloss,\lossmatrix,\scoresubset}(0) = 0$.
The calibration function also provides the so-called \emph{excess risk bound}
\begin{equation}
\label{eq:excessRiskBound}
\calibrationfunc_{\surrogateloss,\lossmatrix,\scoresubset}(\excess\lossweighted(\scorev, \qv)) \leq \excess\surrogateweighted(\scorev, \qv), \; \forall \scorev \in \scoresubset, \; \forall \qv \in \simplex_\gtvarcard,
\end{equation}
which implies the formal connection between the surrogate and task risks~\cite[Theorem 2]{osokin17consistency}.

The calibration function can fully characterize consistency of the setting defined by the surrogate loss, the subspace of scores and the task loss.
The maximal value of $\eps$ at which the calibration function $\calibrationfunc_{\surrogateloss,\lossmatrix,\scoresubset}(\eps)$ equals zero shows the best accuracy on the actual loss that can be obtained~\cite[Theorem 6]{osokin17consistency}.
The notion of level-$\consbreakpoint$ consistency captures this effect.
\begin{definition}[level-$\consbreakpoint$ consistency, \cite{osokin17consistency}]
    \label{def:consistency}
    A surrogate loss~$\surrogateloss$ is \emph{consistent up to level~$\consbreakpoint \geq 0$} w.r.t.\ a task loss~$\lossmatrix$ and a set of scores~$\scoresubset$ if and only if the calibration function satisfies $\calibrationfunc_{\surrogateloss,\lossmatrix,\scoresubset}(\eps) > 0$ for all $\eps > \consbreakpoint$
    and there exists $\hat\eps > \consbreakpoint$ such that $\calibrationfunc_{\surrogateloss,\lossmatrix,\scoresubset}(\hat\eps)$ is finite.
\end{definition}
The case of level-$0$ consistency corresponds to the classical consistent surrogate and Fisher consistency.
When $\consbreakpoint > 0$, the surrogate is not consistent, meaning that the actual risk cannot be minimized globally.
However, \citet[Appendix E.4]{osokin17consistency} give an example where even though constructing a consistent setting is possible (by the choice of the score subspace~$\scoresubset$), it might still be beneficial to use only a level-$\consbreakpoint$ consistent setting because of the exponentially faster growth of the calibration function.
The main contribution of this paper is a lower bound on the calibration function (Theorem~\ref{th:lowerBoundCalibrationFunctionInConsistent}), which is non-zero for $\consbreakpoint > 0$ and thus can be used to obtain convergence rates in inconsistent settings.

\textbf{Optimization and learning guarantees; normalizing the calibration function.}
\citet{osokin17consistency} note that the scale of the calibration function is not defined, i.e., if one multiplies the surrogate loss by some positive constant, the calibration function is multiplied by the same constant as well.
One way to define a ``natural normalization'' is to use a scale-invariant convergence rate of a stochastic optimization algorithm.
\citet[Section 3.3]{osokin17consistency} applied the classical online ASGD~\cite{nemirovski09} (under the well-specification assumption) and got the sample complexity (and the convergence rate of ASGD at the same time) result saying that~$\maxiter^*$ steps of ASGD are sufficient to get~$\eps$-accuracy on the \emph{task loss} (in expectation), where $\maxiter^*$ is computed as follows:
\begin{equation}
\label{eq:calFuncNormalized}
\maxiter^* := \frac{4 D^2 M^2}{\vphantom{(\big(\bigr)}\check{\calibrationfunc}_{\surrogateloss,\lossmatrix,\scoresubset}^2(\eps) }.
\end{equation}
Here the quantity~$\maxiter^*$ depends on a convex lower bound~$\check{\calibrationfunc}_{\surrogateloss,\lossmatrix,\scoresubset}(\eps)$ on the calibration function~$\calibrationfunc_{\surrogateloss,\lossmatrix,\scoresubset}(\eps)$ and the constants $D$, $M$, which appear in the convergence rate of ASGD: $D$ is an upper bound on the norm of an optimal solution and $M^2$ is an upper bound on the expected square norm of the stochastic gradient.
\citet{osokin17consistency} show how to bound the constant~$DM$ for a very specific quadratic surrogate defined below (see Section~\ref{sec:notation:boundsQuadSurrogate}).

\subsection{Bounds for the Quadratic Surrogate} \label{sec:notation:boundsQuadSurrogate}
The major complication in applying and interpreting the theoretical results presented in Section~\ref{sec:notation:calibration} is the complexity of computing the calibration function.
\citet{osokin17consistency} analyzed the calibration function only for the quadratic surrogate
\begin{equation}
\label{eq:quadrLoss}
\surrogatelossquad( \scorev, \outputvarv )
:=
\frac{1}{2\outputvarcard} \| \scorev + \lossmatrix(:, \gtvarv)\|_2^2 = \frac{1}{2\outputvarcard} \sum\nolimits_{\outputvarv \in \outputdomain} (\score_\outputvarv^2 + 2\score_\outputvarv L(\outputvarv,\gtvarv)  + L(\outputvarv,\gtvarv)^2).
\end{equation}
For any task loss~$\lossmatrix$, this surrogate is consistent whenever the subspace of allowed scores is rich enough, i.e., the subspace of scores~$\scoresubset$ fully contains~$\colspace(\lossmatrix)$.
To connect with optimization, we assume a parametrization of the subspace~$\scoresubset$ as a span of the columns of some matrix~$\scorematrix$, i.e., $\scoresubset = \colspace(\scorematrix) = \{\scorev = \scorematrix\scoreparamv \mid \scoreparamv\in \R^\scoresubspacedim\}$.\footnote{We do a pointwise analysis in this section, so we are not modeling the dependence of~$\scoreparamv$ on the features~$\inputvarv$.
However, in an actual implementation, the vector~$\scoreparamv$ should be a function of the features~$\inputvarv$ coming from some flexible family such as a RKHS or some neural networks.}
In the interesting settings, the dimension~$\scoresubspacedim$ is much smaller than both~$\outputvarcard$ and~$\gtvarcard$.
Note that to compute the gradient of the objective~\eqref{eq:quadrLoss} w.r.t.\ the parameters~$\scoreparamv$, one needs to compute matrix products $\scorematrix^\transpose \scorematrix \in \R^{\scoresubspacedim \times \scoresubspacedim}$ and $\scorematrix^\transpose \lossmatrix(:, \gtvarv) \in \R^\scoresubspacedim$, which are usually both of feasible sizes, but require exponentially big sum ($\outputvarcard$ summands) inside.
Computing these quantities can be seen as some form of inference required to run the learning process.

\citet{osokin17consistency} proved a lower bound on the calibration functions for the quadratic surrogate~\eqref{eq:quadrLoss}~\cite[Theorem 7]{osokin17consistency}, which we now present to contrast our result presented in Section~\ref{sec:lowerBound}.
When the subspace of scores~$\scoresubset$ contains~$\colspace(\lossmatrix)$, $\colspace(\lossmatrix) \subseteq \scoresubset$, implying that the setting is consistent, the calibration function is bounded from below by $\min_{i\neq j} \frac{\eps^2}{2\outputvarcard \|\proj_{\scoresubset} \Delta_{ij}\|_2^2}$, where $\proj_{\scoresubset}$ is the orthogonal projection on the subspace~$\scoresubset$ and $\Delta_{ij} := \unit_i - \unit_j \in \R^\outputvarcard$ with $\unit_c$ being the $c$-th basis vector of the standard basis in~$\R^\outputvarcard$.
They also showed that for some very structured losses (Hamming and block 0-1 losses), the quantity $\outputvarcard \|\proj_{\scoresubset} \Delta_{ij}\|_2^2$ is not exponentially large and thus the calibration function suggests that efficient learning is possible.
One interesting case not studied by~\citet{osokin17consistency} is the situation where the subspace of scores~$\scoresubset$ does not fully contain the subspace~$\colspace(\lossmatrix)$.
In this case, the surrogate might not be consistent but still lead to effective and efficient practical algorithms. %

\textbf{Normalizing the calibration function.} 
The normalization constant~$DM$ appearing in~\eqref{eq:calFuncNormalized} can also be computed for the quadratic surrogate~\eqref{eq:quadrLoss} under the assumption of well-specification (see \cite[Appendix F]{osokin17consistency} for details).
In particular, we have
$
DM
=
\Lmax^2 \xi(\condnum(\scorematrix) \sqrt{\scoresubspacedim} \rkhsbound  \qbound), \; \xi(z) = z^2 + z,
$%
\;where~$\Lmax$ denotes the maximal value of all elements in~$\lossmatrix$,
$\condnum(\scorematrix)$ is the condition number of the matrix~$\scorematrix$ %
and $\scoresubspacedim$ in an upper bound on the rank of~$\scoresubset$.
The constants $\rkhsbound$ and $\qbound$ come from the kernel ASGD setup and, importantly, depend only on the data distribution, but not on the loss~$\lossmatrix$ or score matrix~$\scorematrix$.
Note that for a given subspace~$\scoresubset$, the choice of matrix~$\scorematrix$ is arbitrary and it can always be chosen as an orthonormal basis of~$\scoresubset$ giving a $\condnum(\scorematrix)$ of one.
However, such~$\scorematrix$ can lead to inefficient prediction~\eqref{eq:predictor}, which makes the whole framework less appealing.
Another important observation coming from the value of~$DM$ is the justification of the~$\frac{1}{\outputvarcard}$ scaling in front of the surrogate~\eqref{eq:quadrLoss}.

\section{Calibration Function for Inconsistent Surrogates} \label{sec:lowerBound}
Our main result generalizes the Theorem 7 of~\cite{osokin17consistency} to the case of inconsistent surrogates (the key difference consists in the absence of the assumption $\colspace(\lossmatrix) \subseteq \scoresubset$).

\begin{theorem}[Lower bound on the calibration function $\calibrationfunc_{\surrogatelossquad,\lossmatrix,\scoresubset}(\eps)$]
    \label{th:lowerBoundCalibrationFunctionInConsistent}
    For any task loss~$\lossmatrix$, its quadratic surrogate~$\surrogatelossquad$, and a score subspace~$\scoresubset$, the calibration function is bounded from below:
    \begin{equation}
    \label{eq:lowerBoundcalibrationFunctionInconsistent}
    \calibrationfunc_{\surrogatelossquad,\lossmatrix,\scoresubset}(\eps)
    \geq
    \min_{i\neq j} 
    \max_{v \geq 0} \frac{(\eps v - \xi_{ij}(v))_+^2}{2\outputvarcard \|\proj_{\scoresubset} \Delta_{ij}\|_2^2},\quad 
    \text{where\;\;\;} \xi_{ij}(v) := \bigl\| \:\lossmatrix^\transpose (v \id_\outputvarcard - \proj_{\scoresubset}) \Delta_{ij} \: \bigr\|_\infty,
    \end{equation}
    where $\proj_{\scoresubset}$ is the orthogonal projection on the subspace~$\scoresubset$, $(x)_+^2 := [x > 0] x^2$ is the truncation of the parabola to its right branch and $\Delta_{ij} := \unit_i - \unit_j \in \R^k$ with $\unit_c\in \R^k$ being the $c$-th column of the identity matrix $\id_\outputvarcard$.
    By convention, if both numerator and denominator of~\eqref{eq:lowerBoundcalibrationFunctionInconsistent} equal zero the whole bound equals zero.
    If only the denominator equals zero then the whole bound equals infinity (the particular pair of~$i$ and~$j$ is effectively not considered).
\end{theorem}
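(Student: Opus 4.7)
The plan is to reduce the calibration function to a scalar optimization problem by mimicking the consistent-case argument of Theorem~7 in \cite{osokin17consistency}, and then to insert a Lagrangian-type relaxation to absorb the extra slack coming from the fact that $\lossmatrix\qv$ need no longer lie in $\scoresubset$. First I would compute both excess risks in closed form. Completing the square in the quadratic surrogate yields
\begin{equation*}
\excess\surrogateweighted(\scorev,\qv) \;=\; \frac{1}{2\outputvarcard}\,\bigl\|\scorev + \proj_{\scoresubset}\lossmatrix\qv\bigr\|_2^2,
\end{equation*}
where the projection $\proj_{\scoresubset}$ appears because, in contrast with $\excess\lossweighted$, the infimum inside $\excess\surrogateweighted$ is taken over $\scoresubset$ rather than all of $\R^\outputvarcard$. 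The excess actual risk constraint $\excess\lossweighted(\scorev,\qv)\geq\eps$ can then be split over index pairs: if $j=\predictor(\scorev)$ and $i$ is a loss-minimizer at $\qv$, then $i\neq j$, $\Delta_{ij}^\transpose\scorev\leq 0$, and $\Delta_{ij}^\transpose\lossmatrix\qv\leq -\eps$. Taking the minimum over $i\neq j$ of the corresponding restricted infima lower bounds $\calibrationfunc_{\surrogatelossquad,\lossmatrix,\scoresubset}(\eps)$.

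Second, I would fix a pair $(i,j)$ and simplify the restricted infimum. Substituting $\mathbf{u} := \scorev + \proj_{\scoresubset}\lossmatrix\qv$, which lies in $\scoresubset$, and using $\Delta_{ij}^\transpose\mathbf{u}=(\proj_{\scoresubset}\Delta_{ij})^\transpose\mathbf{u}$ (valid since $\proj_{\scoresubset}$ is a symmetric idempotent and $\mathbf{u}\in\scoresubset$), the score constraint becomes $(\proj_{\scoresubset}\Delta_{ij})^\transpose\mathbf{u} \leq (\proj_{\scoresubset}\Delta_{ij})^\transpose\lossmatrix\qv$. For fixed $\qv$ this is a linearly constrained quadratic minimization in $\mathbf{u}$, and Cauchy--Schwarz immediately gives
\begin{equation*}
\inf_{\mathbf{u}\in\scoresubset,\; (\proj_{\scoresubset}\Delta_{ij})^\transpose\mathbf{u}\leq (\proj_{\scoresubset}\Delta_{ij})^\transpose\lossmatrix\qv} \|\mathbf{u}\|_2^2 \;=\; \frac{\bigl(-(\proj_{\scoresubset}\Delta_{ij})^\transpose\lossmatrix\qv\bigr)_+^2}{\|\proj_{\scoresubset}\Delta_{ij}\|_2^2}.
\end{equation*}
Thus what remains is to lower bound $-(\proj_{\scoresubset}\Delta_{ij})^\transpose\lossmatrix\qv$ over $\qv\in\simplex_\gtvarcard$ subject to $-\Delta_{ij}^\transpose\lossmatrix\qv\geq\eps$.

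The third step is the main obstacle, because it is the one that produces the free parameter $v$ in~\eqref{eq:lowerBoundcalibrationFunctionInconsistent}: a Lagrangian-style relaxation of the excess-loss constraint. For every $v\geq 0$, a direct expansion gives the identity
\begin{equation*}
-(\proj_{\scoresubset}\Delta_{ij})^\transpose\lossmatrix\qv \;=\; v\eps \;+\; v\bigl(-\Delta_{ij}^\transpose\lossmatrix\qv - \eps\bigr) \;+\; \bigl[\lossmatrix^\transpose(v\id_\outputvarcard - \proj_{\scoresubset})\Delta_{ij}\bigr]^\transpose\qv.
\end{equation*}
The middle summand is non-negative on the feasible set, and the last summand is at least the smallest entry of the bracketed vector because $\qv\in\simplex_\gtvarcard$. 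That smallest entry is in turn at least $-\xi_{ij}(v)$ by the definition of the $\infty$-norm, so $-(\proj_{\scoresubset}\Delta_{ij})^\transpose\lossmatrix\qv \geq v\eps - \xi_{ij}(v)$. Plugging the positive part into the Cauchy--Schwarz bound, maximizing over $v\geq 0$, and minimizing over $i\neq j$ then produces exactly~\eqref{eq:lowerBoundcalibrationFunctionInconsistent}. A sanity check in the consistent regime $\colspace(\lossmatrix)\subseteq\scoresubset$ gives $\xi_{ij}(1)=0$, so taking $v=1$ recovers the bound of \cite{osokin17consistency}; the boundary conventions stated with the theorem handle the degenerate cases where $\proj_{\scoresubset}\Delta_{ij}=0$ and the $(i,j)$-subproblem is either trivial or infeasible.
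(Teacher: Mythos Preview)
Your proposal is correct and reaches the same bound, but the route is not the one the paper takes. The paper, after the same reduction to a convex QP indexed by $(i,j)$ and the same relaxation (dropping all constraints except $\Delta_{ij}^\transpose\lossmatrix\qv\le-\eps$, $\Delta_{ij}^\transpose\scorev\le0$, and $\qv\in\simplex_\gtvarcard$), builds the \emph{Dorn dual} of this QP and then manipulates the dual: it uses the dual equality constraint to eliminate $\scoreparamv,\qv$, changes variables to isolate $v=v_\lossmatrix/v_\scorematrix$, optimizes the remaining quadratic in $\hat v_\scorematrix$ analytically, and finally sets $\hat u=-\xi_{ij}(v)$ by invoking the symmetry between $(i,j)$ and $(j,i)$. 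Your argument stays entirely in the primal: the substitution $\mathbf u=\scorev+\proj_\scoresubset\lossmatrix\qv$ followed by Cauchy--Schwarz replaces the Dorn machinery for the inner minimization in $\scorev$, and your scalar identity with the free $v\ge0$ is exactly weak duality (a Lagrangian relaxation) for the outer minimization in $\qv$. Your bound $\min_c w_c\ge -\|w\|_\infty$ is a priori slightly looser than the paper's $\hat u=\min_c w_c$, but since $\xi_{ij}(v)=\xi_{ji}(v)$ and $\|\proj_\scoresubset\Delta_{ij}\|=\|\proj_\scoresubset\Delta_{ji}\|$, the subsequent $\min_{i\neq j}$ makes the two coincide. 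What your approach buys is transparency and self-containment (no need to cite Dorn's duality or to verify feasibility of intermediate dual variables); what the paper's approach buys is a more mechanical derivation that could be reused for other surrogate/constraint combinations without having to spot the right substitution by hand. One minor point to tighten: the ``split over index pairs'' step hides the tie-breaking subtlety that the paper handles via its Lemma~\ref{th:breakingSymmetries} (passing to the closures $\overline{\scoresubset}_j$); you should cite or reproduce that lemma rather than asserting the decomposition directly.
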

The proof of Theorem~\ref{th:lowerBoundCalibrationFunctionInConsistent} starts with using the idea of~\cite{osokin17consistency} to compute the calibration function by solving a collection of convex quadratic programs (QPs).
Then we diverge from the proof of~\cite{osokin17consistency} (because it leads to a non-informative bound in inconsistent settings).
For each of the formulated QPs, we construct a dual by using the approach of~\citet{dorn60duality}.
The dual of Dorn is convenient for our needs because it does not require inverting the matrix defining the quadratic terms (compared to the standard Lagrangian dual).
The complete proof is given in Appendix~\ref{sec:LowerBoundThmProof}.

\begin{remark}
    The numerator of the bound~\eqref{eq:lowerBoundcalibrationFunctionInconsistent} explicitly specifies the point at which the bound becomes non-zero, implying level-$\consbreakpoint$ consistency with~$\consbreakpoint = \frac{\xi_{ij}(v)}{v}$ for the values of~$i$, $j$, $v$ that are active for a particular $\eps$.
    The quantity~$\frac{v^2}{2\outputvarcard \|\proj_{\scoresubset} \Delta_{ij}\|_2^2}$ bounds the weight of the $\eps^2$ term in the calibration function after it leaves zero.
    Moving the quantity~$v$ defines the trade-off between the slope, which is related to the convergence speed of the algorithm, and the value of~$\consbreakpoint$ defining the best achievable accuracy.
\end{remark}

\begin{remark}
    If we have conditions of Theorem 7 of~\cite{osokin17consistency} satisfied, i.e., $\colspace(\lossmatrix) \subseteq \scoresubset$, then the vector~$\lossmatrix^\transpose (\id_\outputvarcard - \proj_{\scoresubset}) \Delta_{ij}$ equals zero and $\xi_{ij}(v)$ becomes $ |v - 1|\: \| \lossmatrix^\transpose \Delta_{ij} \|_\infty$, which equals zero when $v = 1$.
    It might seem that having $v > 1$ can potentially give us a tighter lower bound than Theorem~7~\cite{osokin17consistency} even in consistent cases.
    However, the quantity~$\| \lossmatrix^\transpose \Delta_{ij} \|_\infty$ upper bounds the maximal possible (w.r.t.\ the conditional distribution  $\P_\data(\cdot \:| \: \inputvarv)$) value of the excess task loss for a fixed pair~$i$, $j$ leading to the identity $ v \eps  - |v - 1|\: \| \lossmatrix^\transpose \Delta_{ij} \|_\infty = \| \lossmatrix^\transpose \Delta_{ij} \|_\infty$ for $\eps = \| \lossmatrix^\transpose \Delta_{ij} \|_\infty$ and $v \geq 1$.
    Together with the convexity of the function~$(x)^2_+$, this implies that the best possible value of $v$ in consistent settings equals one.
\end{remark}

\begin{remark}
    Setting~$v$ in~\eqref{eq:lowerBoundcalibrationFunctionInconsistent} to any non-negative constant gives a valid lower bound.
    In particular, setting~$v$ to 1 (while potentially making the bound less tight) highlights the separation between the weight of the quadratic term and the best achievable accuracy~$\consbreakpoint$.
    The bound now reads as follows:\\[-2mm]
    \begin{equation}
    \label{eq:lowerBoundcalibrationFunctionInconsistentNoV}
    \calibrationfunc_{\surrogatelossquad,\lossmatrix,\scoresubset}(\eps)
    \geq
    \min_{i\neq j} 
    \frac{(\eps - \xi_{ij})_+^2}{2\outputvarcard \|\proj_{\scoresubset} \Delta_{ij}\|_2^2},\quad 
    \text{where\;\;\;} \xi_{ij} := \bigl\|\:\lossmatrix^\transpose (\id_\outputvarcard - \proj_{\scoresubset}) \Delta_{ij} \:\bigr\|_\infty.
    \end{equation}
    Note that the weight of the $\eps^2$ term now equals the corresponding coefficient of the bound of Theorem~7~\cite{osokin17consistency}.
    Notably, this weight depends only on the score subspace~$\scoresubset$, but not on the loss~$\lossmatrix$.
\end{remark}

\vspace{-2mm}\section{Bounds for Particular Losses} \label{sec:losses}

\vspace{-2mm}\subsection{Multi-Class Classification with the Tree-Structured Loss} \label{sec:hierarhicalLoss}

\vspace{-2mm}As an illustration of the obtained lower bound~\eqref{eq:lowerBoundcalibrationFunctionInconsistent}, we consider the task of multi-class classification and the \emph{tree-structured loss}, which is defined for a weighted tree built on labels (such trees on labels often appear in settings with large number of labels, e.g., extreme classification~\cite{choromanska2013extreme}).
Leaves in the tree correspond to the class labels $\outputvar \in \outputdomain = \gtdomain$ and the loss function is defined as the length of the path~$\treedist$ between the leaves, i.e., $\lossmatrix_{\text{\textup{tree}}}(\gtvar , \outputvar) := \treedist(\gtvar, \outputvar)$.
To compute the lower bound exactly, we assume that the number of children $\nchildren_s$ and the weights of the edges connecting a node with its children $\frac{\treeweight_s}{2}$ are equal for all the nodes of the same depth level $s=0, \dots, \treedepth-1$ (see Figure~\ref{fig:treeillustration} in Appendix~\ref{sec:treeProofs} for an example of such a tree) and that $\sum_{s=0}^{\treedepth - 1} \alpha_s = 1$, which normalizes~$\Lmax$ to one.
To define the score matrix $\scoresubset_{\text{\textup{tree}}, \consistencydepth}$, we set the \emph{consistency depth} $\consistencydepth \in \{ 1, \dots,  \treedepth \}$ and restrict the scores $\scorev$ to be equal for the groups (blocks) of leaves that have the same ancestor on the level~$\consistencydepth$.
Let $B(i)$ be the set of leaves that have the same ancestor as a leaf $i$ at the depth $\consistencydepth$.
With this notation, we have $\scoresubset_{\text{\textup{tree}}, \consistencydepth} = \colspace{ \{ \sum_{i \in B(j)} \unit_i \mid j = 1, \dots, \outputvarcard \} }$.
Theorem~\ref{th:lowerBoundCalibrationFunctionInConsistent} gives us the bound (see Appendix~\ref{sec:treeProofs}):\\[-2mm]
\begin{equation}
\label{eq:consistencypaperboundgeneralized}
\calibrationfunc_{\surrogatelossquad, \lossmatrix_{\text{\textup{tree}}},\scoresubset_{\text{\textup{tree}}, \consistencydepth}}(\eps) \geq
\lbrack \epsilon > \blockdist_\consistencydepth \rbrack
\frac{(\blockdist_\consistencydepth - \avgblockdist_\consistencydepth + \alpha_{\consistencydepth -1})^2}{(\frac{\blockdist_\consistencydepth}{2} + \alpha_{\consistencydepth - 1})^2}\frac{(\epsilon - \frac{\blockdist_\consistencydepth}{2})_+^2}{4\numblocks_\consistencydepth} ,
\end{equation}
where $\numblocks_\consistencydepth$, $\avgblockdist_\consistencydepth \!:=\! \frac{1}{|B(j)|} \sum_{i \in B(j)} \treedist(i, j) \!=\! \sum_{s=\consistencydepth}^{\treedepth - 1} \alpha_s \frac{ (\prod\nolimits_{s'=\consistencydepth}^{s} \nchildren_{s'} ) - 1}{\prod\nolimits_{s'=\consistencydepth}^{s} \nchildren_{s'}}$  and $\blockdist_\consistencydepth \!:=\! \max_{i \in B(j)} \treedist(i, j) \!=\! \sum_{s = \consistencydepth}^{\treedepth - 1} \treeweight_s$ are the number of blocks, the average and maximal distance within a block, respectively.%

Now we discuss the behavior of the bound~\eqref{eq:consistencypaperboundgeneralized} when changing the truncation level~$\consistencydepth$.
With the growth of~$\consistencydepth$, the level of consistency~$\blockdist_\consistencydepth$ goes to~$0$ indicating that more labels can be distinguished.
At the same time, we have $\tfrac{\blockdist_\consistencydepth}{2} \leq \avgblockdist_\consistencydepth$ for the trees we consider and thus the coefficient in front of the $\eps^2$ term can be bounded from above by $\frac{1}{4 \numblocks_\consistencydepth}$, which means that the lower bound on the calibration function decreases at an exponential rate with the growth of $\consistencydepth$. These arguments show the trade-off between the level of consistency and the coefficient of~$\eps^2$ in the calibration function.

Finally, note that the mixture of 0-1 and block 0-1 losses considered in~\cite[Appendix E.4]{osokin17consistency} is an instance of the tree-structured loss with $\treedepth = 2$.
Their bound~\cite[Proposition 17]{osokin17consistency} matches~\eqref{eq:consistencypaperboundgeneralized} up to the difference in the definition of the calibration function (they do not have the $\lbrack \epsilon > \blockdist_\consistencydepth \rbrack$ multiplier because they do not consider pairs of labels that fall in the same block).

\vspace{-1mm}\subsection{Mean Average Precision (mAP) Loss for Ranking} \label{sec:rankingLosses}
\vspace{-2mm}The mAP loss, which is a popular way of measuring the quality of ranking, has attracted significant attention from the consistency point of view~\cite{buffoni2011learning,calauzenes12,ramaswamy13rankSurrogates}.
In the mAP setting, the ground-truth labels are binary vectors $\gtvarv \in \gtdomain = \{0, 1\}^\size$ that indicate the items relevant for the query (a subset of~$\size$ items-to-rank) and the prediction consists in producing a permutation of items $\routputvar \in \outputdomain$, $\outputdomain = \mathcal{S}_\size$.
The mAP loss is based on averaging the precision at different levels of recall and is defined as follows:\\[-2mm]
\begin{equation}
\label{eq:mapLoss}
\lossmatrix_{\text{\textup{mAP}}}(\routputvar, \gtvar)
:= 1 - \frac{1}{|\gtvarv|} \sum_{p: \gtvar_p=1}^{\size} \frac{1}{\sigma(p)} \sum_{q=1}^{\sigma(p)} \gtvar_{\routputvar^{-1}(q)}
= 1 - \sum_{p=1}^{\size} \sum_{q=1}^{p} \frac{1}{\max(\sigma(p), \sigma(q))} \frac{\gtvar_p \gtvar_q}{|\gtvarv|},
\end{equation}
where $\sigma(p)$ is the position of an item~$p$ in a permutation~$\sigma$, $\sigma^{-1}$ is the inverse permutation and $|\gtvarv| := \sum_{p=1}^{\size} \gtvar_p$.
The second identity provides a convenient form of writing the mAP loss~\cite{ramaswamy13rankSurrogates} showing that the loss matrix~$\lossmatrix_{\text{\textup{mAP}}}$ is of rank at most~$\frac12\size(\size+1)$.\footnote{\citet[Proposition 21]{ramaswamy16calibrDim} showed that the rank of~$\lossmatrix_{\text{\textup{mAP}}}$ is a least~$\frac12\size(\size+1) - 2$.}
The matrix~$\scorematrix_{\text{\textup{mAP}}} \in \R^{\size! \times \frac12\size(\size+1)}$ such that $(\scorematrix_{\text{\textup{mAP}}})_{\sigma, pq} := \frac{1}{\max(\sigma(p), \sigma(q))}$ is a natural candidate to define the score subspace~$\scoresubset$ to get the consistent setting with the quadratic surrogate~\eqref{eq:quadrLoss} (Eq.~\eqref{eq:mapLoss} implies that~$\colspace(\lossmatrix_{\text{\textup{mAP}}}) = \colspace( \scorematrix_{\text{\textup{mAP}}} )$).

However, as noted in Section 6 of~\cite{ramaswamy13rankSurrogates}, although the matrix $\scorematrix_{\text{\textup{mAP}}}$ is convenient from the consistency point of view (in the setup of~\cite{ramaswamy13rankSurrogates}), it leads to the prediction problem~$\max_{\sigma \in \mathcal{S}_\size} (\scorematrix_{\text{\textup{mAP}}} \scoreparamv )_\sigma$, which is a quadratic assignment problem (QAP), and most QAPs are NP-hard.

To be able to predict efficiently, it would be beneficial to have the matrix~$\scorematrix$ with $\size$ columns such that sorting the  $\size$-dimensional~$\scoreparamv$ would give the desired permutation.
It appears that it is possible to construct such a matrix by selecting a subset of columns of matrix~$\scorematrix_{\text{\textup{mAP}}}$.
We define~$\scorematrix_{\text{\textup{sort}}} \in \R^{\size! \times \size}$ by $(\scorematrix_{\text{\textup{sort}}})_{\sigma, p} := \frac{1}{\sigma(p)}$.
A solution of the prediction problem $\max_{\sigma \in \mathcal{S}_\size} (\scorematrix_{\text{\textup{sort}}} \scoreparamv )_\sigma$ is simply a permutation that sorts the elements of~$\scoreparamv \in \R^{\size}$ in the decreasing order (this statement follows from the fact that we can always increase the score~$(\scorematrix_{\text{\textup{sort}}} \scoreparamv )_\sigma = \sum_{p=1}^\size \frac{\scoreparam_p}{\sigma(p)}$ by swapping a pair of non-aligned items).

\begin{figure}[t]
    \begin{tabular}{c@{}c@{}c}
        \includegraphics[width=0.4\textwidth]{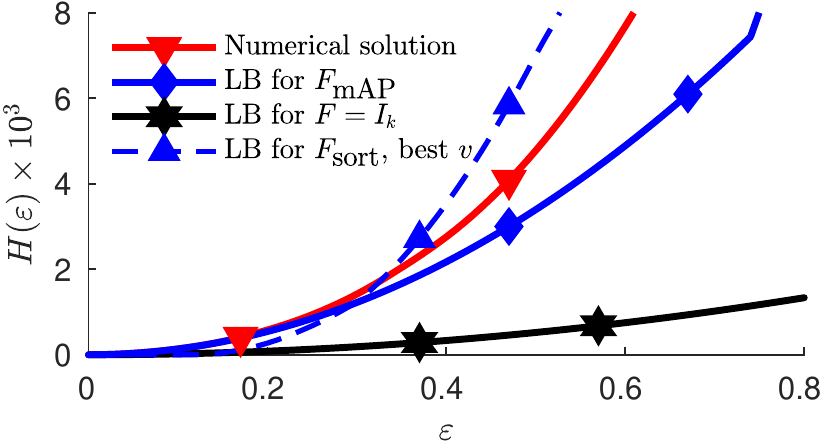}
        &
        \includegraphics[width=0.4\textwidth]{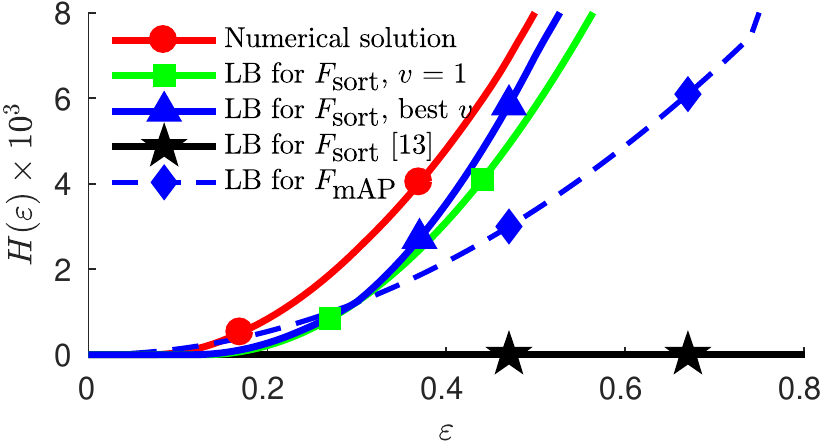}
        &
        \includegraphics[width=0.17\textwidth]{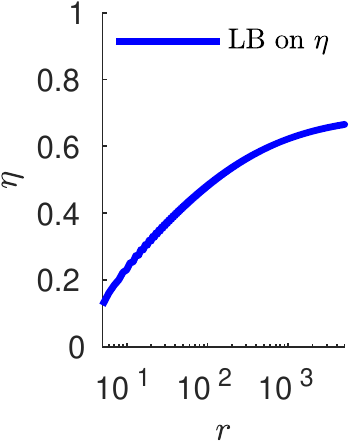}
        \\
        (a): Consistent surrogate with~$\scorematrix_{\text{\textup{mAP}}}$
        &
        (b): Inconsistent surrogate with~$\scorematrix_{\text{\textup{sort}}}$
        &
        (c): LB on~$\consbreakpoint$
    \end{tabular}
    \caption{%
        \textbf{Plot (a)} shows the calibration function~$\calibrationfunc_{\surrogatelossquad,\lossmatrix_{\text{\textup{mAP}}},\scoresubset_{\text{\textup{mAP}}}}(\eps)$ for $\lossmatrix_{\text{\textup{mAP}}}$ (red line) obtained numerically.
    The solid blue line~\cite[Theorem~7]{osokin17consistency} is its lower bound, LB, and the solid black line is the worst case bound obtained for~$\scorematrix = \id_{\size!}$ (which means not constructing an appropriate low-dimension~$\scoresubset$).
    Difference between the blue and the black lines is exponential (proportional to~$\size!$).
    The dashed blue line illustrates the inconsistent surrogate (note that it is zero for small~$\eps > 0$, but then grows faster than the solid blue line~-- the consistent setting).
    \textbf{Plot (b)} shows the calibration function~$\calibrationfunc_{\surrogatelossquad,\lossmatrix_{\text{\textup{mAP}}},\scoresubset_{\text{\textup{sort}}}}(\eps)$ (red line) obtained numerically (this setting is level-$\consbreakpoint$ consistent for $\consbreakpoint \approx 0.08$).
    The blue line~(Theorem~\ref{th:lowerBoundCalibrationFunctionInConsistent}) is its lower bound for the optimal value of~$v$ and the green line is the bound for~$v = 1$ (easier to obtain).
    The black line shows the zero-valued trivial bound from~\cite{osokin17consistency}.
    The dashed blue line shows $\calibrationfunc_{\surrogatelossquad,\lossmatrix_{\text{\textup{mAP}}},\scoresubset_{\text{\textup{mAP}}}}(\eps)$ for the consistent surrogate to compare the two settings.
    Note that in both plots (a) and (b) the solid blue lines are the lower bounds of the corresponding calibration functions (red lines), but the dashed blue lines are not (shown for comparison purposes).
    \textbf{Plot (c)} shows a lower bound on the point~$\eta$ where the exact calibration function~$\calibrationfunc_{\surrogatelossquad,\lossmatrix_{\text{\textup{mAP}}},\scoresubset_{\text{\textup{sort}}}}(\eps)$ stops being zero, indicating the level of consistency (Definition~\ref{def:consistency}).
    \label{fig:rankingPlots} \vspace{-4mm}}
\end{figure}

Most importantly for our study, the columns of the matrix $\scorematrix_{\text{\textup{sort}}}$ are a subset of the columns of the matrix~$\scorematrix_{\text{\textup{mAP}}}$, which indicates that learning with the convenient matrix~$\scorematrix_{\text{\textup{sort}}}$ might be sufficient for the mAP loss.
In what follows, we study the calibration functions for the loss matrix~$\lossmatrix_{\text{\textup{mAP}}}$ and score matrices $\scorematrix_{\text{\textup{mAP}}}$ and $\scorematrix_{\text{\textup{sort}}}$.
In Figure~\ref{fig:rankingPlots}a-b, we plot the calibration functions for both~$\scorematrix_{\text{\textup{mAP}}}$ and ~$\scorematrix_{\text{\textup{sort}}}$ and the lower bounds given by Theorem~\ref{th:lowerBoundCalibrationFunctionInConsistent}.
All the curves were obtained for~$\size = 5$ (computing the exact values of the calibration functions is exponential in $\size$).

Next, we study the behavior of the lower bound~\eqref{eq:lowerBoundcalibrationFunctionInconsistent} for large values of~$\size$.
In Lemma~\ref{th:rankingasymptotics} of Appendix~\ref{sec:rankingProofs}, we show that the denominator of the bound~\eqref{eq:lowerBoundcalibrationFunctionInconsistent} is not exponential in~$\size$ (we have $2\size! \|\proj_{\scoresubset_{\text{\textup{sort}}}} \Delta_{\pi \omega}\|_2^2 = O(\size)$).
We also know that~$\|\proj_{\scoresubset_{\text{\textup{sort}}}} \Delta_{\pi \omega}\|_2^2 \leq \|\proj_{\scoresubset_{\text{\textup{mAP}}}} \Delta_{\pi \omega}\|_2^2$ (because~$\scoresubset_{\text{\textup{sort}}}$ is a subspace of~$\scoresubset_{\text{\textup{mAP}}}$), which implies that the calibration function of the consistent setting grows not faster than the one of the inconsistent setting.
We can also numerically compute a lower bound on the point $\consbreakpoint$ until which the calibration function is guaranteed to be zero (for this we simply pick two permutations~$\pi$, $\omega$ and a labeling~$\gtvarv$ that delivers large values of $\bigr( \lossmatrix_{\text{\textup{mAP}}}^\transpose ( \id_\outputvarcard - \proj_{\scoresubset_{\text{\textup{sort}}}}) \Delta_{ij} \bigr)_{\gtvarv} \leq \xi_{\pi, \omega}(1)$).
Figure~\ref{fig:rankingPlots}c shows that the level of inconsistency~$\consbreakpoint$ grows with the growth of~$\size$, which makes the method less appealing for large-scale settings.

Finally, note that to run the ASGD algorithm for the quadratic surrogate~\eqref{eq:quadrLoss}, mAP loss and score matrix~$\scorematrix_{\text{\textup{sort}}}$, we need to efficiently compute $\scorematrix_{\text{\textup{sort}}}^\transpose \scorematrix_{\text{\textup{sort}}}$ and $\scorematrix_{\text{\textup{sort}}}^\transpose \lossmatrix_{\text{\textup{mAP}}}(:, \gtvarv)$. Lemmas~\ref{th:rankingprojection} and~\ref{th:rankingxijv} (see Appendix~\ref{sec:rankingProofs}) provide linear in~$\size$ time algorithms for doing this.
The condition number of~$\scoresubset_{\text{\textup{sort}}}$ grows as~$\Theta(\log \size)$ keeping the sample complexity bound~\eqref{eq:calFuncNormalized} well behaved.

\section{Discussion} \label{sec:conclusion}
\vspace{-2mm}\textbf{Related works.} Despite a large number of works studying consistency and calibration in the context of machine learning, there have been relatively few attempts to obtain guarantees for inconsistent surrogates.
The most popular approach is to study consistency under so-called \emph{low noise} conditions.
Such works show that under certain assumptions on the data generating distribution~$\data$ (usually these assumptions are on the conditional distribution of labels and are impossible to verify for real data) the surrogate of interest becomes consistent, whereas being inconsistent for general~$\data$.
\citet{duchi10} established such a result for the value-regularized linear surrogate for ranking (which resembles the pairwise disagreement, PD, loss).
\citet{ramaswamy13rankSurrogates} provided similar results for the mAP and PD losses for ranking and their quadratic surrogate.
Similarly to our conclusions, the mAP surrogate of~\cite{ramaswamy13rankSurrogates} is consistent with~$\frac{1}{2} \size(\size+1)$ parameters learned and only low-noise consistent with~$\size$ parameters learned.
\citet{long2013consistency} introduced a notion of realizable consistency w.r.t.\ a function class (they considered linear predictors), which is consistency w.r.t.\ the function class assuming the data distribution such that labels depend on features deterministically with this dependency being in the correct function class.
\citet{bendavid2012minimizing} worked in the agnostic setting for binary classification (no assumptions on the underlying~$\data$) and provided guarantees on the error of linear predictors when the margin was bounded by some constant (their work reduces to consistency in the limit case, but is more general).

\textbf{Conclusion.} Differently from the previous approaches, we do not put constraints on the data generating distribution, but instead study the connection between the surrogate and task losses by the means of the calibration function (following~\cite{osokin17consistency}), which represents the worst-case scenario.
For the quadratic surrogate~\eqref{eq:quadrLoss}, we can bound the calibration function from below in such a way that the bound is non-trivial in inconsistent settings (differently from~\cite{osokin17consistency}).
Our bound quantifies the level of inconsistency of a setting (defined by the used surrogate loss, task loss and parametrization of the scores) and allows to analyze when learning with inconsistent surrogates can be beneficial.
We illustrate the behavior of our bound for two tasks (multi-class classification and ranking) and show examples of conclusions that our approach can give.

\textbf{Future work.} It would be interesting to combine our quantitative analysis with the constraints on the data distribution, which might give adaptive calibration functions (in analogy to adaptive convergence rates in convex optimization: for example, SAGA~\cite{defazio14saga} has a linear convergence rate for strongly convex objectives and $1/t$ rate for non-strongly convex ones),
and with the recent results of~\citet{pillaud2018} showing that under some low-noise assumptions even slow convergence of the surrogate objective can imply exponentially fast convergence of the task loss.

\subsubsection*{Acknowledgements}
This work was partly supported by Samsung Research, by Samsung Electronics, by the Ministry of Education and Science of the Russian Federation (grant 14.756.31.0001) and by the NSERC Discovery
Grant RGPIN-2017-06936. 

\bibliographystyle{icml2017}
\setlength{\bibsep}{1.2mm plus 0.3ex}
{ \small
    \bibliography{references}}

\clearpage
\appendix

\newcommand{\toptitlebar}{
    \hrule height 4pt
    \vskip 0.25in
    \vskip -\parskip%
}
\newcommand{\bottomtitlebar}{
    \vskip 0.29in
    \vskip -\parskip
    \hrule height 1pt
    \vskip 0.09in%
}
\toptitlebar
\begin{center}
    {\LARGE\bf Supplementary Material (Appendix)}
    \\[4mm]
    {\LARGE\bf Quantifying Learning Guarantees for Convex but Inconsistent Surrogates}
\end{center}
\bottomtitlebar

\section*{Outline} 
\begin{description}[itemsep=0mm,topsep=0cm,leftmargin=*,font=\normalfont]
    \renewcommand\labelitemi{--}
    \item[Section~\ref{sec:techLemmas}:] Proofs of the two technical lemmas used in  Theorem~\ref{th:lowerBoundCalibrationFunctionInConsistent}.
    \item[Section~\ref{sec:LowerBoundThmProof}:] Proof of Theorem~\ref{th:lowerBoundCalibrationFunctionInConsistent}, which is the main result of this paper.
    \item[Section~\ref{sec:treeProofs}:] Lower bound on the calibration function for the tree-structured loss.
    \item[Section~\ref{sec:rankingProofs}:] Derivations for the mean average precision loss.
\end{description}

\section{Technical Lemmas}
\label{sec:techLemmas}
In this section, we prove two technical lemmas that are used in the proofs of the main theoretical claims of the paper. These two lemmas are the generalizations of the two corresponding lemmas of~\cite{osokin17consistency}.

Lemma~\ref{th:quadSurrogateExcess} computes the excess of the weighted surrogate risk~$\excess\surrogateweighted$ for the quadratic loss~$\surrogatelossquad$~\eqref{eq:quadrLoss}, which is central to our analysis presented in Section~\ref{sec:lowerBound}.
Lemma~\ref{th:quadSurrogateExcess} generalizes Lemma~9 of~\cite{osokin17consistency} by removing the assumption of $\colspace(\lossmatrix) \subseteq \scoresubset$.
Analogously to Lemma~9~\cite{osokin17consistency}, the key property of this result is that the excess~$\excess\surrogateweighted$ is jointly convex w.r.t.\ the parameters~$\scoreparamv$ and conditional distribution~$\qv$, which allows further analysis.

Lemma~\ref{th:breakingSymmetries} allows to cope with the combinatorial aspect of the calibration function computation.
In particular, when the excess of the weighted surrogate risk is convex, Lemma~\ref{th:breakingSymmetries} reduces the computation of the calibration function to a set of convex optimization problems, which often can be solved analytically.
Note that our Lemma~\ref{th:breakingSymmetries} is slightly different from Lemma 10 of~\citet{osokin17consistency} to deal with the difference of the definition of the excess population risk~\eqref{eq:excessRisks}.
\begin{lemma}
    \label{th:quadSurrogateExcess}
    Consider the quadratic surrogate $\surrogatelossquad$~\eqref{eq:quadrLoss} defined for a task loss~$\lossmatrix$.
    Let a subspace of scores~$\scoresubset \subseteq \R^\outputvarcard$ be parametrized by $\scoreparamv\in \R^\scoresubspacedim$, i.e., $\scorev = \scorematrix\scoreparamv \in \scoresubset$ with $\scorematrix \in \R^{\outputvarcard \times \scoresubspacedim}$.
    Then, the excess of the weighted surrogate loss can be expressed as
    \begin{equation*}
    \excess\surrogateweightedquad(\scorematrix\scoreparamv, \qv) := 
    \surrogateweightedquad(\scorematrix\scoreparamv, \qv)-\inf_{\scoreparamv' \in \R^\scoresubspacedim} \surrogateweightedquad(\scorematrix\scoreparamv', \qv) =
    \frac{1}{2\outputvarcard}\| \scorematrix\scoreparamv + \proj_{\scoresubset} \lossmatrix \qv \|_2^2,
    \end{equation*}
    where $\proj_{\scoresubset} := \scorematrix (\scorematrix^\transpose \scorematrix)^\pinv \scorematrix^\transpose$ is the orthogonal projection on the subspace~$\scoresubset = \colspace(\scorematrix)$.
\end{lemma}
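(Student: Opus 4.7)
The plan is to write $\surrogateweightedquad(\scorematrix\scoreparamv, \qv)$ as an explicit quadratic function of~$\scoreparamv$, locate its minimizer, and then express the excess via the standard identity for quadratic functions.

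First I would expand the definition of $\surrogatelossquad$ from~\eqref{eq:quadrLoss} and pull the expectation over~$\qv$ inside, obtaining
\begin{equation*}
\surrogateweightedquad(\scorematrix\scoreparamv, \qv)
=
\frac{1}{2\outputvarcard}\Bigl(\scoreparamv^\transpose \scorematrix^\transpose \scorematrix \scoreparamv + 2 \scoreparamv^\transpose \scorematrix^\transpose \lossmatrix \qv + \sum\nolimits_{\gtvarv} q_\gtvarv \|\lossmatrix(:,\gtvarv)\|_2^2\Bigr),
\end{equation*}
where the last term depends only on~$\qv$. Thus as a function of~$\scoreparamv$ this is a convex quadratic with PSD Hessian $\tfrac{1}{\outputvarcard}\scorematrix^\transpose \scorematrix$ and linear coefficient $\tfrac{1}{\outputvarcard}\scorematrix^\transpose \lossmatrix \qv$.

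Next I would verify that the infimum over~$\scoreparamv' \in \R^{\scoresubspacedim}$ is attained. The normal equations are $\scorematrix^\transpose \scorematrix\, \scoreparamv^* = -\scorematrix^\transpose \lossmatrix \qv$, and since $\range(\scorematrix^\transpose \scorematrix) = \range(\scorematrix^\transpose)$ (a standard fact from linear algebra), the right-hand side always lies in the range of the Hessian, so a minimizer exists. One valid choice is $\scoreparamv^* = -(\scorematrix^\transpose \scorematrix)^\pinv \scorematrix^\transpose \lossmatrix \qv$, and substituting gives $\scorematrix \scoreparamv^* = -\scorematrix(\scorematrix^\transpose \scorematrix)^\pinv \scorematrix^\transpose \lossmatrix \qv = -\proj_{\scoresubset} \lossmatrix \qv$, using that $\scorematrix (\scorematrix^\transpose \scorematrix)^\pinv \scorematrix^\transpose$ is precisely the orthogonal projector onto $\colspace(\scorematrix) = \scoresubset$ (which I would briefly justify from the SVD of~$\scorematrix$).

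Finally, I would apply the identity that for any convex quadratic $f(\scoreparamv) = \tfrac{1}{2}\scoreparamv^\transpose A \scoreparamv + b^\transpose \scoreparamv + c$ with PSD~$A$ and minimizer~$\scoreparamv^*$, one has $f(\scoreparamv) - f(\scoreparamv^*) = \tfrac{1}{2}(\scoreparamv - \scoreparamv^*)^\transpose A (\scoreparamv - \scoreparamv^*)$ (this holds even when $A$ is singular, as can be checked by a direct expansion using $A \scoreparamv^* = -b$). With $A = \tfrac{1}{\outputvarcard}\scorematrix^\transpose \scorematrix$, this gives
\begin{equation*}
\excess\surrogateweightedquad(\scorematrix\scoreparamv, \qv)
=
\frac{1}{2\outputvarcard}(\scoreparamv - \scoreparamv^*)^\transpose \scorematrix^\transpose \scorematrix (\scoreparamv - \scoreparamv^*)
=
\frac{1}{2\outputvarcard}\| \scorematrix(\scoreparamv - \scoreparamv^*) \|_2^2
=
\frac{1}{2\outputvarcard}\| \scorematrix\scoreparamv + \proj_{\scoresubset}\lossmatrix \qv \|_2^2,
\end{equation*}
which is the claimed formula. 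The main obstacle, if any, is the mild care needed when $\scorematrix^\transpose \scorematrix$ is singular (so that~$\scoreparamv^*$ is not unique), but the image $\scorematrix\scoreparamv^* = -\proj_{\scoresubset}\lossmatrix\qv$ is unique and the argument above goes through without change.
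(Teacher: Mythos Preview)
Your proof is correct and follows essentially the same approach as the paper: expand the weighted quadratic surrogate, identify the minimizer $\scoreparamv^* = -(\scorematrix^\transpose \scorematrix)^\pinv \scorematrix^\transpose \lossmatrix \qv$, and compute the excess directly. You are in fact slightly more careful than the paper in handling the case where $\scorematrix^\transpose \scorematrix$ is singular (the paper implicitly relies on the pseudoinverse without commenting on existence or non-uniqueness of~$\scoreparamv^*$), but the underlying argument is the same.
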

\begin{proof}
    The proof is almost identical to the proof of Lemma~9 of~\cite{osokin17consistency} generalizing it only in the last equality.
    By the definition of the quadratic surrogate~$\surrogatelossquad$~\eqref{eq:quadrLoss}, we have
    \begin{align*}
    \surrogateweighted(\scorev(\scoreparamv), \qv) &= \frac{1}{2\outputvarcard}(\scoreparamv^\transpose \scorematrix^\transpose \scorematrix \scoreparamv + 2 \scoreparamv^\transpose \scorematrix^\transpose \lossmatrix \qv) + r(\qv), \\
    \scoreparamv^* &:= \argmin\nolimits_\scoreparamv \surrogateweighted(\scorev(\scoreparamv), \qv) =  -(\scorematrix^\transpose \scorematrix)^\pinv \scorematrix^\transpose \lossmatrix \qv, \\
    \excess\surrogateweighted(\scorev(\scoreparamv), \qv) &= \frac{1}{2\outputvarcard}(\scoreparamv^\transpose \scorematrix^\transpose \scorematrix \scoreparamv + 2 \scoreparamv^\transpose \scorematrix^\transpose \lossmatrix \qv 
    + \qv^\transpose \lossmatrix^\transpose \scorematrix (\scorematrix^\transpose \scorematrix)^\pinv \scorematrix^\transpose \lossmatrix \qv) \\
    & = \frac{1}{2\outputvarcard} \| \scorematrix\scoreparamv + \proj_{\scoresubset} \lossmatrix \qv \|_2^2,
    \end{align*}
    where $r(\qv)$ denotes the quantity independent of the parameters~$\scoreparamv$.
    Note that if the assumption $\colspace(\lossmatrix) \subseteq \colspace(\scorematrix)$ holds we have 
    $
    \proj_{\scoresubset} \lossmatrix = \lossmatrix,
    $
    which is the statement of Lemma~9~\cite{osokin17consistency}.
\end{proof}

\begin{lemma}
    \label{th:breakingSymmetries}
    For any task loss~$\lossmatrix$, a surrogate loss~$\surrogateloss$ that is continuous and bounded from below, and a set of scores~$\scoresubset$, the calibration function can be lower bounded as 
    \begin{equation}
    \label{eq:calibrationfunc:decomposition}
    \calibrationfunc_{\surrogateloss,\lossmatrix,\scoresubset}(\eps) \geq \min_{i\neq j} \calibrationfunc_{ij}(\eps),
    \end{equation}
    where $\calibrationfunc_{ij}$ is defined via minimization of the same objective as~\eqref{eq:calibrationfunc}, but w.r.t.\ a smaller domain:
    \begin{align}
    \label{eq:calibrationfunc:generalBreakingSymmetries}
    \calibrationfunc_{ij}(\eps)
    =
    \inf_{\scorev, \qv} \;& \excess\surrogateweighted(\scorev, \qv), \\
    \notag %
    \text{s.t.} \;& \lossweighted_i(\qv) \leq\lossweighted_j(\qv) - \eps,\\
    \notag %
    &\lossweighted_i(\qv) \leq \lossweighted_c(\qv),\;\; \forall c \in \outputdomain, \\
    \notag %
    &\score_j \geq \score_c,\;\:\: \forall c \in \outputdomain, \\
    \notag %
    & \scorev \in \scoresubset,\\
    \notag
    & \qv \in \simplex_\gtvarcard.
    \end{align}
    Here $\lossweighted_c(\qv) := (\lossmatrix \qv)_c$ is the expected loss if predicting label~$c$.
    The index~$i$ represents a label with the smallest expected loss while the index~$j$ represents a label with the largest score.
\end{lemma}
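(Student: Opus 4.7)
The plan is a straightforward decomposition of the feasible set of the calibration program according to two combinatorial pieces of data: the predicted label $j := \predictor(\scorev)$ and any label $i$ minimizing the conditional expected loss, i.e., $i \in \argmin_c \lossweighted_c(\qv)$. Given any $(\scorev, \qv)$ feasible for $\calibrationfunc_{\surrogateloss,\lossmatrix,\scoresubset}(\eps)$, I will show that this pair $(i,j)$ places $(\scorev,\qv)$ inside the feasible set of the $(i,j)$-subproblem~\eqref{eq:calibrationfunc:generalBreakingSymmetries}. This immediately gives $\excess\surrogateweighted(\scorev,\qv) \geq \calibrationfunc_{ij}(\eps) \geq \min_{i' \neq j'} \calibrationfunc_{i'j'}(\eps)$, and taking the infimum over all feasible $(\scorev,\qv)$ then yields the desired inequality~\eqref{eq:calibrationfunc:decomposition}.

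The feasibility check itself is brief. The constraint $\score_j \geq \score_c$ for all $c \in \outputdomain$ is immediate from $j = \argmax_c \score_c$ together with the deterministic tie-breaking rule. The constraint $\lossweighted_i(\qv) \leq \lossweighted_c(\qv)$ for all $c$ is immediate from the choice of $i$. For the remaining constraint, I would use the two identities $\lossweighted(\scorev, \qv) = \lossweighted_{\predictor(\scorev)}(\qv) = \lossweighted_j(\qv)$ and $\inf_{\hat\scorev \in \R^\outputvarcard} \lossweighted(\hat\scorev, \qv) = \min_c \lossweighted_c(\qv) = \lossweighted_i(\qv)$. Together they turn the original hypothesis $\excess\lossweighted(\scorev,\qv) \geq \eps$ into $\lossweighted_i(\qv) \leq \lossweighted_j(\qv) - \eps$. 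For $\eps > 0$ this strict inequality also forces $i \neq j$, so the outer minimum in~\eqref{eq:calibrationfunc:decomposition} is well-defined; the boundary case $\eps = 0$ is trivial since $\calibrationfunc_{\surrogateloss,\lossmatrix,\scoresubset}(0) = 0$ and each $\calibrationfunc_{ij} \geq 0$.

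There is no genuine obstacle here; the argument is essentially bookkeeping once the correct $(i,j)$ assignment is identified. The only subtle point, and the one real difference from Lemma~10 of~\cite{osokin17consistency}, is that here the outer infimum defining $\excess\lossweighted$ ranges over $\R^\outputvarcard$ rather than over $\scoresubset$. This is what forces the condition $\lossweighted_i(\qv) \leq \lossweighted_c(\qv)$ to be imposed for every $c \in \outputdomain$, not only for those labels realizable as $\predictor(\scorev)$ with $\scorev \in \scoresubset$, and it is harmless because $\min_c \lossweighted_c(\qv)$ is attained by a label and hence by some score vector in $\R^\outputvarcard$ whose argmax is that label.
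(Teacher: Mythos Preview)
Your argument is correct and is essentially the paper's own proof: the paper partitions the feasible set of~\eqref{eq:calibrationfunc}--\eqref{eq:calibrationfunc:epsConstr} into cells $\scoresubset_j \times \simplex_{\gtvarcard,i,j,\eps}$ indexed by $j=\predictor(\scorev)$ and $i\in\argmin_c\lossweighted_c(\qv)$, then enlarges each $\scoresubset_j$ to its closure $\overline{\scoresubset}_j=\{\scorev\in\scoresubset:\score_j\geq\score_c\ \forall c\}$, which is exactly your pointwise assignment. One small slip: your sentence on $\eps=0$ yields the inequality in the wrong direction (from $\calibrationfunc(0)=0$ and $\calibrationfunc_{ij}\geq 0$ you only get $\min_{i\neq j}\calibrationfunc_{ij}(0)\geq\calibrationfunc(0)$, not $\leq$), but this boundary case is immaterial for how the lemma is used and the paper's proof does not treat it separately either.
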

\begin{proof}
    We use the notation $\scoresubset_j$ to define the set of score vectors~$\scorev$ where the predictor $\predictor(\scorev)$ takes the value~$j$, i.e., $\scoresubset_j := \{\scorev \in \scoresubset \mid \predictor(\scorev) = j\}$.
    The union of the sets $\scoresubset_j$, $j \in \outputdomain$, equals the whole set~$\scoresubset$.
    Sets $\scoresubset_j$ might not contain their boundaries because of the usage of a particular tie-breaking strategy, thus we consider the sets $\overline{\scoresubset}_j := \{\scorev \in \scoresubset \mid \score_j \geq  \score_c, \forall c \in \outputdomain\}$, which are the closures of $\scoresubset_j$ if $\scoresubset_j$ are not empty.
    It also might happen that because of a particular tie-breaking strategy a set~$\scoresubset_j$ is empty, while the corresponding $\overline{\scoresubset}_j$ is not.
    
    If $\scorev \in \scoresubset_j$, i.e. $j=\predictor(\scorev)$, then the feasible set of  probability vectors~$\qv$ for which a label~$i$ is one of the best possible predictions (i.e. $\excess\lossweighted(\scorev, \qv) = \ell_j(\qv) - \ell_i(\qv)  \geq \eps$) equals $$\simplex_{\gtvarcard,i,j,\eps} := \{ \qv \in \simplex_{\gtvarcard} \mid \lossweighted_i(\qv) \leq \lossweighted_c(\qv), \forall c \in \outputdomain; \ell_j(\qv) - \ell_i(\qv) \geq \eps\},$$ because $\inf_{\scorev' \in \R^\outputvarcard} \lossweighted(\scorev', \qv) = \min_{c \in \outputdomain} \lossweighted_c(\qv)$.
    
    The union of the sets~$\{\scoresubset_j \times \simplex_{\gtvarcard,i,j,\eps}\}_{i, j \in \outputdomain, i\neq j}$ exactly equals the feasibility set of the optimization problem~\eqref{eq:calibrationfunc}-\eqref{eq:calibrationfunc:epsConstr} (note that this is not true for the union of the sets~$\{\overline{\scoresubset}_j \times \simplex_{\gtvarcard,i,j,\eps}\}_{i, j \in \outputdomain, i\neq j}$, which can be strictly larger), thus we can rewrite the definition of the calibration function as follows:
    \begin{equation}
    \label{eq:calibrationfunc:proof}
    \calibrationfunc_{\surrogateloss,\lossmatrix,\scoresubset}(\eps) = \min_{\substack{i, j \in \outputdomain \\ i\neq j}} \inf_{\substack{\scorev \in \scoresubset_j,\\ \qv \in \simplex_{\gtvarcard,i,j,\eps}}} \excess\surrogateweighted(\scorev, \qv)
    \geq
    \min_{\substack{i, j \in \outputdomain \\ i\neq j}} \inf_{\substack{\scorev \in \overline{\scoresubset}_j,\\ \qv \in \simplex_{\gtvarcard,i,j,\eps}}} \excess\surrogateweighted(\scorev, \qv)
    =
    \min_{i\neq j} \calibrationfunc_{ij}(\eps),
    \end{equation}
    which finishes the proof.
    Note that the inequality of~\eqref{eq:calibrationfunc:proof} can be not tight only if some~$\scoresubset_j$ is empty, but the corresponding  $\overline{\scoresubset}_j$ is not (due to continuity of the function~$\excess\surrogateweighted(\scorev, \qv)$, which follows from Lemma~27 of~\citep{zhang04}).
\end{proof}

\section{Proof of Theorem~\ref{th:lowerBoundCalibrationFunctionInConsistent}} \label{sec:LowerBoundThmProof}
\begin{reptheorem}{th:lowerBoundCalibrationFunctionInConsistent}[Lower bound on the calibration function $\calibrationfunc_{\surrogatelossquad,\lossmatrix,\scoresubset}(\eps)$]
    \label{th:rep:lowerBoundCalibrationFunctionInConsistent}
    For any task loss~$\lossmatrix$, its quadratic surrogate~$\surrogatelossquad$, and a score subspace~$\scoresubset$, the calibration function is bounded from below:
    \begin{equation}
    \label{eq:rep:lowerBoundcalibrationFunctionInconsistent}
    \calibrationfunc_{\surrogatelossquad,\lossmatrix,\scoresubset}(\eps)
    \geq
    \min_{i\neq j} 
    \max_{v \geq 0} \frac{(\eps v - \xi_{ij}(v))_+^2}{2\outputvarcard \|\proj_{\scoresubset} \Delta_{ij}\|_2^2},\quad 
    \text{where\;\;\;} \xi_{ij}(v) := \bigl\| \:\lossmatrix^\transpose (v \id_\outputvarcard - \proj_{\scoresubset}) \Delta_{ij} \: \bigr\|_\infty,
    \end{equation}
    $\proj_{\scoresubset}$ is the orthogonal projection on the subspace~$\scoresubset$, $(x)_+^2 := [x > 0] x^2$ is the truncation of the parabola to its right branch  and $\Delta_{ij} := \unit_i - \unit_j \in \R^k$ with $\unit_c\in \R^k$ being the $c$-th column of the identity matrix $\id_\outputvarcard$.
    By convention, if both numerator and denominator of~\eqref{eq:rep:lowerBoundcalibrationFunctionInconsistent} equal zero the whole bound equals zero.
    If only the denominator equals zero then the whole bound equals infinity (the particular pair of~$i$ and~$j$ is effectively not considered).
\end{reptheorem}
\begin{proof}
    First, let us assume that the score subspace $\scoresubset$ is defined as the column space of a matrix~$\scorematrix \in \R^{\outputvarcard \times \scoresubspacedim}$, i.e., $\scorev(\scoreparamv)=\scorematrix\scoreparamv$.
    For technical convenience, we can also assume that~$\scorematrix$ is of the full rank, $\rank(\scorematrix) = \scoresubspacedim$.
    Lemma~\ref{th:quadSurrogateExcess} gives us the expression~$\excess\surrogateweightedquad(\scorematrix\scoreparamv, \qv)  =
    \frac{1}{2\outputvarcard}\| \scorematrix\scoreparamv + \proj_{\scoresubset} \lossmatrix \qv \|_2^2$ for the excess surrogate, which is jointly convex w.r.t.\ a conditional probability vector~$\qv$ and parameters~$\scoreparamv$.
    
    The optimization problem~\eqref{eq:calibrationfunc}-\eqref{eq:calibrationfunc:epsConstr} is non-convex because the constraint~\eqref{eq:calibrationfunc:epsConstr} on the excess risk depends of the predictor function~$\predictor(\scorev)$, see Eq.~\eqref{eq:predictor}, containing the~$\argmax$ operation.
    However, if we constrain the predictor to output label $j$, i.e., $\score_j \geq \score_c$, $\forall c$, and the label delivering the smallest possible expected loss to be $i$, i.e., $(\lossmatrix \qv)_i \leq (\lossmatrix \qv)_c$, $\forall c$, the problem becomes convex because all the constraints are linear and the objective is convex.
    Lemma~\ref{th:breakingSymmetries} in Appendix~\ref{sec:techLemmas} allows to bound the calibration function with the minimal w.r.t.\ selected labels $i$ and $j$ optimal value of one of the convex problems, i.e.,  $\calibrationfunc_{\surrogatelossquad,\lossmatrix,\scoresubset}(\eps) \geq \min\limits_{i \neq j} \calibrationfunc_{ij}(\eps)$, %
    where~$\calibrationfunc_{ij}(\eps)$ is defined as follows:
    \begin{align}
    \label{eq:calibrationfunc:breakingSymmetries}
    \calibrationfunc_{ij}(\eps)
    =
    \min_{\scoreparamv, \qv} \;& \frac{1}{2 \outputvarcard}\|\scorematrix \scoreparamv + \proj_{\scoresubset}\lossmatrix \qv\|_2^2, \\
    \notag %
    \text{s.t.} \;& (\lossmatrix \qv)_i \leq (\lossmatrix \qv)_j - \eps,\\
    \notag %
    &(\lossmatrix \qv)_i \leq (\lossmatrix \qv)_c,\;\; \forall c \in \outputdomain, \\
    \notag %
    &(\scorematrix \scoreparamv)_j \geq (\scorematrix \scoreparamv)_c,\; \forall c \in \outputdomain, \\
    \notag %
    & \qv \in \simplex_\gtvarcard.
    \end{align}

    To obtain a lower bound, we relax~\eqref{eq:calibrationfunc:breakingSymmetries} by removing some of the constraints and arrive at
    \begin{align}
    \label{eq:calibrationfunc:breakingSymmetries:stripped}
    \outputvarcard  \calibrationfunc_{ij}(\eps)
    \geq
    \min_{\scoreparamv, \qv} \;& \frac{1}{2}\|\scorematrix \scoreparamv + \proj_{\scoresubset} \lossmatrix \qv\|_2^2, \\
    \label{eq:calibrationfunc:breakingSymmetries:stripped:epsConstr}
    \text{s.t.} \;& \Delta_{ij}^\transpose \lossmatrix \qv \leq - \eps,\\
    \label{eq:calibrationfunc:breakingSymmetries:stripped:fConstr}
    &\Delta_{ij}^\transpose \scorematrix \scoreparamv \leq 0, \\
    \label{eq:calibrationfunc:breakingSymmetries:stripped:simplexSum}
    & \one_\gtvarcard^\transpose \qv = 1, \\
    \label{eq:calibrationfunc:breakingSymmetries:stripped:simplexSign}
    & \q_c \geq 0, \;\; c=1,\dots, \gtvarcard.
    \end{align}
    where 
    $\Delta_{ij}^\transpose \lossmatrix \qv = (\lossmatrix \qv)_i - (\lossmatrix \qv)_j$,
    $\Delta_{ij}^\transpose \scorematrix \scoreparamv = (\scorematrix \scoreparamv)_i - (\scorematrix \scoreparamv)_j$, 
    and $\Delta_{ij} = \unit_i - \unit_j \in \R^k$ with $\unit_c \in \R^k$ being a vector with $1$ at position~$c$ and zeros elsewhere.
    Note that the relaxation defined by the problem~\eqref{eq:calibrationfunc:breakingSymmetries:stripped}-\eqref{eq:calibrationfunc:breakingSymmetries:stripped:simplexSign} is tighter than the one used in the proof of Theorem 7~\cite[Eq. (25)-(27)]{osokin17consistency}, because the latter omitted the simplex constraints~\eqref{eq:calibrationfunc:breakingSymmetries:stripped:simplexSum}-\eqref{eq:calibrationfunc:breakingSymmetries:stripped:simplexSign}.
    
    We now explicitly build a dual problem to the QP~\eqref{eq:calibrationfunc:breakingSymmetries:stripped}-\eqref{eq:calibrationfunc:breakingSymmetries:stripped:simplexSign}.
    If we used the standard Lagrangian approach we would have to invert the matrix defining the objective, which is difficult.
    Instead we use the dual formulation of~\citet[Page 160]{dorn60duality}, which allows to build a dual without inverting any matrices.\footnote{Here we show the dual of~\citet{dorn60duality} for the exact combination of constraints we are using.
        In the dual formulation, $\vec{v}$ and $\vec{u}$ are the extra variables corresponding to the inequality and equality constraints, respectively.\\[-3mm]
        \begin{center}
            \begin{minipage}{.4\textwidth}
                \centering
                The primal problem\\[-5mm]
                \begin{align*}
                \min_{\qv \geq 0,\; \scoreparamv} &\; \frac12 \begin{pmatrix} \qv^\transpose & \!\!\!\!\!\scoreparamv^\transpose\end{pmatrix} \begin{pmatrix} 
                H_{\qv\qv} & H_{\qv\scoreparamv} \\
                H_{\qv\scoreparamv}^\transpose & H_{\scoreparamv\scoreparamv} 
                \end{pmatrix} \begin{pmatrix} \qv \\ \scoreparamv\end{pmatrix}, \\
                \mbox{s.t.}  &\; A_{\qv} \qv  + A_\scoreparamv\scoreparamv \geq \vec{b} \\
                &\; C_{\qv} \qv  + C_\scoreparamv\scoreparamv = \vec{d}
                \end{align*}
            \end{minipage}
            \begin{minipage}{.4\textwidth}
                \centering
                The dual problem\\[-5mm]
                \begin{align*}
                \max_{\qv\; \scoreparamv, \; \vec{v} \geq 0,\; \vec{u}} &-\frac12 \begin{pmatrix} \qv^\transpose & \!\!\!\!\!\scoreparamv^\transpose\end{pmatrix} \begin{pmatrix} 
                H_{\qv\qv} & H_{\qv\scoreparamv} \\
                H_{\qv\scoreparamv}^\transpose & H_{\scoreparamv\scoreparamv} 
                \end{pmatrix} \begin{pmatrix} \qv \\ \scoreparamv\end{pmatrix} + \vec{b}^\transpose \vec{v} + \vec{d}^\transpose \vec{u}, \\
                \mbox{s.t.}  &\; A_{\qv}^\transpose \vec{v} + C_{\qv}^\transpose \vec{u} - H_{\qv\qv} \qv - H_{\qv\scoreparamv}\scoreparamv \leq \0 \\
                &\; A_{\scoreparamv}^\transpose \vec{v} + C_{\scoreparamv}^\transpose \vec{u} - H_{\qv\scoreparamv}^\transpose \qv - H_{\scoreparamv\scoreparamv}\scoreparamv = \0
                \end{align*}
            \end{minipage}
    \end{center}}
    For the problem~\eqref{eq:calibrationfunc:breakingSymmetries:stripped}-\eqref{eq:calibrationfunc:breakingSymmetries:stripped:simplexSign}, this dual can be written as follows:
    \begin{align}
    \label{eq:calibrationfunc:breakingSymmetries:strippedDual}
    \outputvarcard \calibrationfunc_{ij}(\eps)
    \geq
    \max_{\scoreparamv, \qv, v_\scorematrix \geq 0, v_\lossmatrix \geq 0, u} \;& -\frac{1}{2}\|\scorematrix \scoreparamv + \proj_{\scoresubset} \lossmatrix \qv\|_2^2 + v_\lossmatrix \eps + u , \\
    \label{eq:calibrationfunc:breakingSymmetries:strippedDual:ineq}
    & -v_{\lossmatrix} \lossmatrix^\transpose \Delta_{ij} + u \one_{\gtvarcard} - \lossmatrix^\transpose \proj_{\scoresubset} \lossmatrix \qv - \lossmatrix^\transpose \scorematrix \scoreparamv \leq \0_{\gtvarcard}, \\
    \label{eq:calibrationfunc:breakingSymmetries:strippedDual:eq}
    & -v_{\scorematrix} \scorematrix^\transpose \Delta_{ij} - \scorematrix^\transpose \lossmatrix \qv - \scorematrix^\transpose \scorematrix \scoreparamv = \0_{\scoresubspacedim}.
    \end{align}
    From the equality~\eqref{eq:calibrationfunc:breakingSymmetries:strippedDual:eq}, we can express $\scorematrix^\transpose \lossmatrix \qv = -v_{\scorematrix} \scorematrix^\transpose \Delta_{ij} - \scorematrix^\transpose \scorematrix \scoreparamv$ and substitute it in the objective~\eqref{eq:calibrationfunc:breakingSymmetries:strippedDual} and inequality~\eqref{eq:calibrationfunc:breakingSymmetries:strippedDual:ineq}.
    Using the identities~$\proj_{\scoresubset} = \scorematrix (\scorematrix^\transpose \scorematrix)^{-1} \scorematrix^\transpose$ and~$\proj_{\scoresubset} \scorematrix = \scorematrix$, we can exclude variables $\scoreparamv$, $\qv$ and get a simpler bound.
    Note that this step leads to a valid lower bound because for any $v_\scorematrix \geq 0$ there exist feasible values of variables~$\qv$ and $\scoreparamv$ (we can take simply $\qv = \0$, $\scoreparamv = - v_{\scorematrix} (\scorematrix^\transpose \scorematrix)^{-1}  \scorematrix^\transpose \Delta_{ij}$).
    The new bound depends on the three variables only:
    \begin{align}
    \label{eq:calibrationfunc:breakingSymmetries:strippedDualSimple}
    \outputvarcard \calibrationfunc_{ij}(\eps)
    \geq
    \max_{v_\scorematrix \geq 0, v_\lossmatrix \geq 0, u} \;& - \frac{1}{2} v_{\scorematrix}^2 \Delta_{ij}^\transpose \proj_{\scoresubset} \Delta_{ij}  + v_\lossmatrix \eps + u, \\
    \label{eq:calibrationfunc:breakingSymmetries:strippedDualSimple:ineq}
    & -v_{\lossmatrix} \lossmatrix^\transpose \Delta_{ij} + u \one_{\gtvarcard} + v_{\scorematrix}\lossmatrix^\transpose \proj_{\scoresubset} \Delta_{ij} \leq \0_{\gtvarcard}.
    \end{align}

    First, consider the case $\Delta_{ij}^\transpose \proj_{\scoresubset} \Delta_{ij} = \| \proj_{\scoresubset} \Delta_{ij} \|_2^2 \neq 0$.
    Given that~$v_{\scorematrix} \geq 0$ we can change the variables by introducing $\hat{v}_{\scorematrix} := v_{\scorematrix} \| \proj_{\scoresubset} \Delta_{ij} \|_2^2$, $v := v_\lossmatrix / v_\scorematrix$, $\hat{u} := u / v_\scorematrix$ after which we get
    \begin{align}
    \label{eq:calibrationfunc:breakingSymmetries:strippedDualSimpleRename}
    \outputvarcard \calibrationfunc_{ij}(\eps)
    \geq
    \frac{1}{\| \proj_{\scoresubset} \Delta_{ij} \|_2^2}\max_{\hat{v}_\scorematrix \geq 0, \hat{v}_\lossmatrix \geq 0, \hat{u}} \;& -\frac{1}{2} \hat{v}_{\scorematrix}^2   + \hat{v}_{\scorematrix} (v \eps + \hat{u}), \\
    \label{eq:calibrationfunc:breakingSymmetries:strippedDualRename:ineq}
    & -v \lossmatrix^\transpose \Delta_{ij} + \hat{u} \one_{\gtvarcard} + \lossmatrix^\transpose \proj_{\scoresubset} \Delta_{ij} \leq \0_{\gtvarcard}.
    \end{align}
    The global minimum of this function w.r.t.\ the variable~$\hat{v}_\scorematrix$ can be found analytically: if $v \eps + \hat{u} \geq 0$ it equals $\frac{1}{2 \| \proj_{\scoresubset} \Delta_{ij} \|_2^2} (v \eps + \hat{u})^2$, and zero otherwise.
    The constraint~\eqref{eq:calibrationfunc:breakingSymmetries:strippedDualRename:ineq} on~$\hat{u}$ can be substituted with $\hat{u} =  - \bigl\| \:\lossmatrix^\transpose (v \id_\outputvarcard - \proj_{\scoresubset}) \Delta_{ij} \: \bigr\|_\infty =: -\xi_{ij}(v)$, because we always consider both~$\calibrationfunc_{ij}(\eps)$ and~$\calibrationfunc_{ji}(\eps)$ when bounding the calibration function.

    Now, consider the boundary case of 
    $\Delta_{ij}^\transpose \proj_{\scoresubset} \Delta_{ij} = \| \proj_{\scoresubset} \Delta_{ij} \|_2^2 = 0$.
    The problem~\eqref{eq:calibrationfunc:breakingSymmetries:strippedDualSimple}-\eqref{eq:calibrationfunc:breakingSymmetries:strippedDualSimple:ineq} becomes~$\frac12\max_{v \geq 0} v ( \eps + \min \lossmatrix^\transpose \Delta_{ij}) $ implying that the objective equals $0$ if $\eps + \min \lossmatrix^\transpose \Delta_{ij} \leq 0$.
    Otherwise, the objective equals $+\infty$, which corresponds to the in-feasibility of the constraint~\eqref{eq:calibrationfunc:breakingSymmetries:stripped:epsConstr} of the primal problem.
    Note that because we always consider both~$\calibrationfunc_{ij}(\eps)$ and~$\calibrationfunc_{ji}(\eps)$ when bounding the calibration function we can substitute $v \min (\lossmatrix^\transpose \Delta_{ij})$ with $-\xi_{ij}(v)$.
\end{proof}

\section{Lower Bound on the Calibration Function for the Tree-Structured Loss}  \label{sec:treeProofs}

In this section, we compute the lower bound on the calibration function for the tree-structured loss defined in Section~\ref{sec:hierarhicalLoss}.
\begin{figure}
    \begin{center}
        \includegraphics[width=80mm]{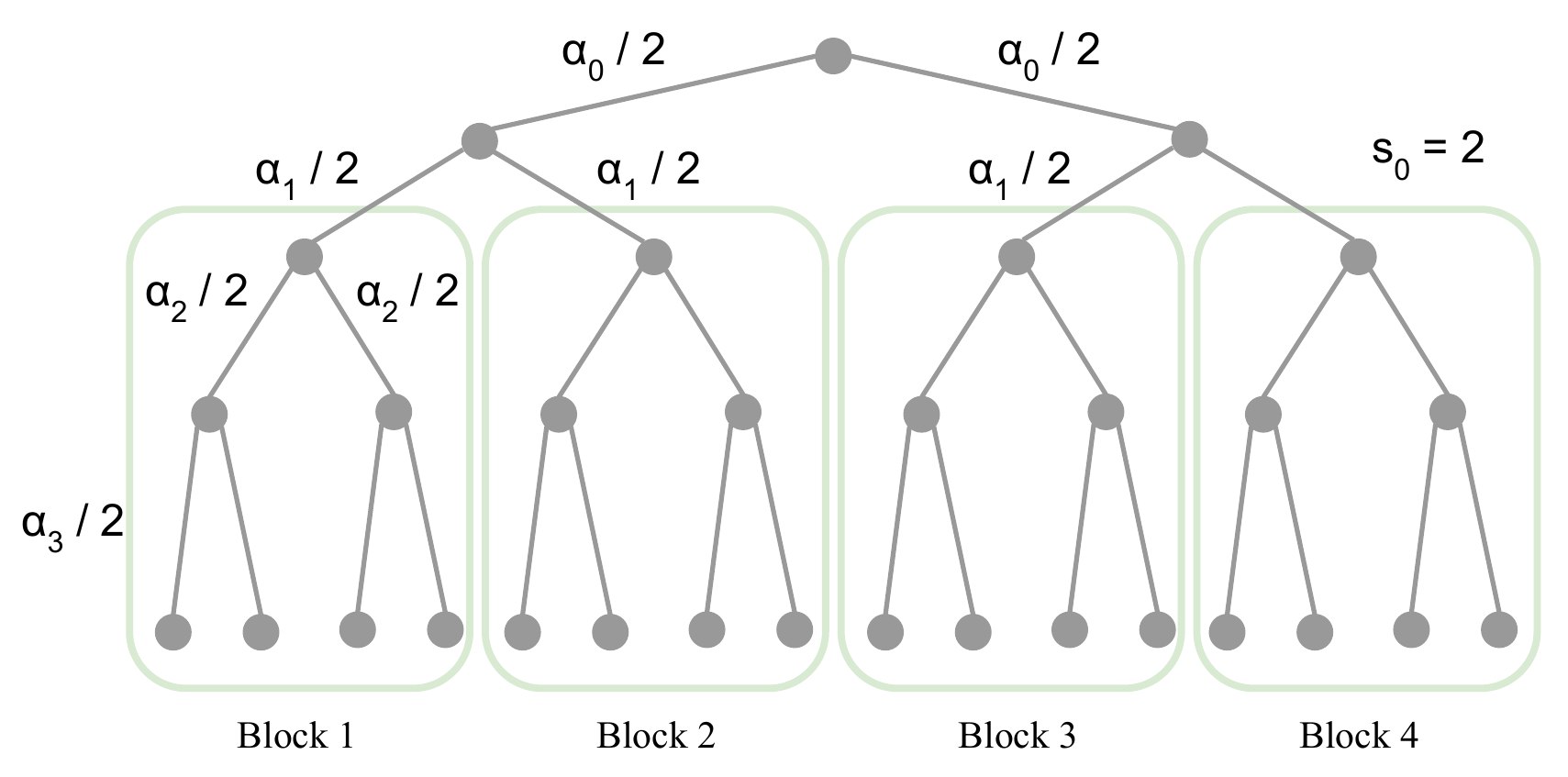}%
        \includegraphics[width=45mm]{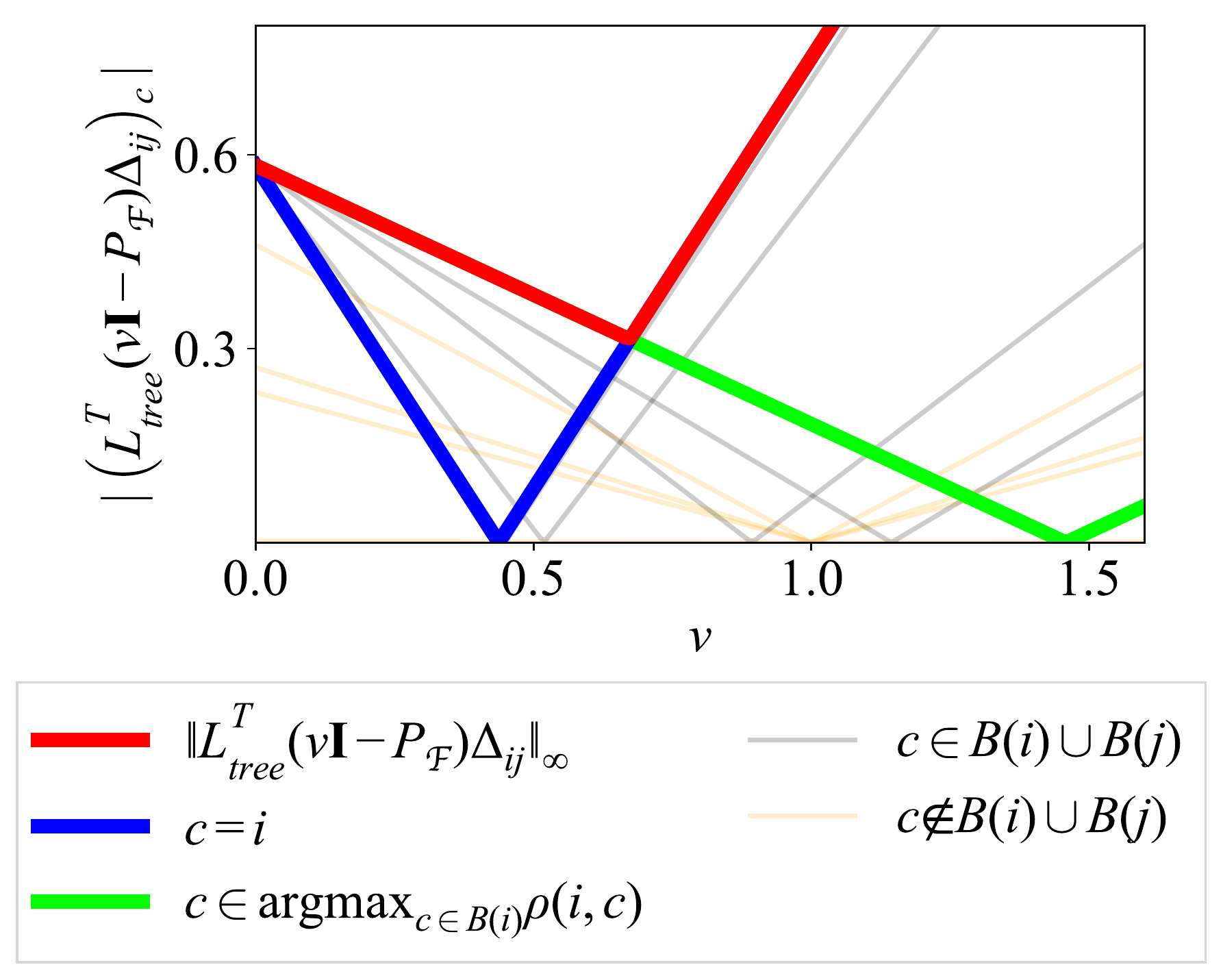}
    \end{center}
    \caption{\emph{Left:} An example of the tree-structured loss for the task of multi-class classification. \emph{Right:} Illustration of the proof of Lemma~\ref{th:treeLoss} (best viewed in color).
    The thin gray and brown lines show the absolute values of the components of the vector $\lossmatrix_{\text{\textup{tree}}}^\transpose (v \id - \proj_{\scoresubset_{\text{\textup{tree}}}}) \Delta_{ij}$ as functions of $v$.
    The bold blue and green lines correspond to the components at which the maximum value is achieved.
    The bold red line shows the resulting norm $\| \lossmatrix_{\text{\textup{tree}}}^\transpose (v \id - \proj_{\scoresubset_{\text{\textup{tree}}}}) \Delta_{ij} \|_\infty$. \label{fig:treeillustration}}
\end{figure}

\begin{lemma}\label{th:treeLoss}
For a particular consistency depth~$\consistencydepth$ and for the corresponding subspace $\scoresubset_{\text{\textup{tree}}, \consistencydepth}$, the projection operator~$\proj_{\scoresubset_\text{\textup{tree}}, \consistencydepth}$ at $\Delta_{ij}$ is computed as
\begin{equation}
\proj_{\scoresubset_\text{\textup{tree}}, \consistencydepth} \Delta_{ij} = \begin{cases}
0, & i \in B(j), \\
\frac{1}{|B(j)|} \left(\sum_{k \in B(i)} \unit_k - \sum_{k \in B(j)} \unit_k \right) & i \notin B(j).
\end{cases}
\end{equation}
The vectors $\xi_{ij}(v)$ are computed as
\begin{equation}
\xi_{ij}(v) = \begin{cases}
v \treedist(i, j) & i \in B(j), \\
\max\{|(v - 1) \treedist(i, j)|, |v(\treedist(i, j) - \blockdist) - (\treedist(i, j) - \avgblockdist)|\} & i \notin B(j),
\end{cases}
\end{equation}
for $\blockdist := \max_{c \in B(i)} \treedist(i, c)$ and $\avgblockdist := \frac{1}{|B(j)|} \sum_{c \in B(i)} \treedist(i, c)$. Finally, the following lower bound of the calibration function for the loss~$\lossmatrix_\text{\textup{tree}}$, its quadratic surrogate~$\surrogatelossquad$ and the score subspace~$\scoresubset_{\text{\textup{tree}}, \consistencydepth}$ holds:
\begin{equation}
\calibrationfunc_{\surrogatelossquad, \lossmatrix_{\text{\textup{tree}}}, \scoresubset_{\text{\textup{tree}}, \consistencydepth}}(\epsilon) \geq \lbrack \epsilon > \blockdist \rbrack \frac{(\minblockdist - \avgblockdist)^2}{(\minblockdist - \frac{\blockdist}{2})^2} \frac{(\epsilon - \frac{\blockdist}{2})_+^2}{4\numblocks},\label{eq:treeLowerBound}
\end{equation}
where $\minblockdist := \min_{c \notin B(i)} \treedist(i, c) > \blockdist$ and $\numblocks$ is the number of blocks when the tree is cut at the depth~$\consistencydepth$. 
\end{lemma}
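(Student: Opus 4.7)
The plan is to compute each ingredient of Theorem~\ref{th:lowerBoundCalibrationFunctionInConsistent} in the tree setting and then minimise over $(i,j)$ and maximise over $v$.

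First, I would determine the projection $\proj_{\scoresubset_{\text{\textup{tree}}, \consistencydepth}}\Delta_{ij}$. Since $\scoresubset_{\text{\textup{tree}}, \consistencydepth}$ is the orthogonal direct sum of the one-dimensional subspaces spanned by the block-indicators $\sum_{k\in B}\unit_k$, the projection of $\unit_i$ is the uniform vector over its block scaled by $1/|B(i)|$. Using $|B(i)|=|B(j)|$ (regularity), $\proj \Delta_{ij}$ vanishes when $i\in B(j)$ and otherwise equals $|B(j)|^{-1}\bigl(\sum_{k\in B(i)}\unit_k-\sum_{k\in B(j)}\unit_k\bigr)$, from which $\|\proj \Delta_{ij}\|_2^2$ follows at once.

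Next, I would evaluate $\xi_{ij}(v)=\|\lossmatrix_{\text{\textup{tree}}}^\transpose(v\id-\proj)\Delta_{ij}\|_\infty$ by unpacking the $c$-th component
\[
\bigl(\lossmatrix_{\text{\textup{tree}}}^\transpose(v\id-\proj)\Delta_{ij}\bigr)_c = v\bigl(\treedist(i,c)-\treedist(j,c)\bigr) - \tfrac{1}{|B(j)|}\Bigl(\textstyle\sum_{k\in B(i)}\treedist(k,c)-\sum_{k\in B(j)}\treedist(k,c)\Bigr)
\]
and analysing the three cases $c\in B(i)$, $c\in B(j)$, $c\notin B(i)\cup B(j)$. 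The central simplifications are two tree-regularity identities: all leaves in a block are equidistant from any leaf outside it, and for any $c\in B$ the sum $\sum_{k\in B}\treedist(k,c)$ is independent of $c\in B$ and equals $|B|\avgblockdist$. Together they make the same-block case ($i\in B(j)$) degenerate, since $\proj\Delta_{ij}=0$ forces $\xi_{ij}(v)=v\max_c|\treedist(i,c)-\treedist(j,c)|=v\treedist(i,j)$. In the different-block case the components reduce to $|vA(c)-B|$ with $B=\treedist(i,j)-\avgblockdist$ fixed and $A(c)=\treedist(i,j)-\treedist(i,c)$ ranging over $c\in B(i)\cup B(j)$; the two extremes, $A=\treedist(i,j)$ (at $c=i$ or $c=j$) and $A=\treedist(i,j)-\blockdist$ (at a leaf of $B(i)$ or $B(j)$ furthest from $i$ or $j$), give the two terms in the claimed formula.

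Finally, I would apply Theorem~\ref{th:lowerBoundCalibrationFunctionInConsistent} and optimise. For same-block pairs the denominator vanishes, and by the theorem's conventions the contribution is $0$ iff some same-block pair has $\treedist(i,j)\geq\epsilon$, i.e., iff $\epsilon\leq\blockdist$, which yields the indicator $[\epsilon>\blockdist]$; for $\epsilon>\blockdist$ every same-block pair contributes $+\infty$ and the minimum lies among different-block pairs. Regularity makes $\treedist(i,j)$ depend only on the blocks' LCA depth, and the simplified bound is monotone in $\treedist(i,j)$, so the minimiser is any pair of adjacent blocks where $\treedist(i,j)=\minblockdist$. For this pair the two absolute values in $\xi_{ij}(v)$ are V-shaped functions of $v$, and the maximiser of $(\epsilon v-\xi_{ij}(v))_+^2$ is pinned at the crossing point $v^*=2B/(A_1+A_2)$ with $A_1=\minblockdist$, $A_2=\minblockdist-\blockdist$. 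Substituting $v^*$, using $2\outputvarcard\|\proj\Delta_{ij}\|_2^2=4\numblocks$, and simplifying gives the stated formula. The main obstacle is the bookkeeping in the second step, specifically verifying that the $c\notin B(i)\cup B(j)$ components do not spoil the two-term maximum for the adjacent-block minimiser (a direct tree-distance calculation, using that the LCA of adjacent blocks lies at depth $\consistencydepth-1$, shows these components vanish; for non-adjacent pairs the simplified bound is already strictly larger so such terms cannot affect the min).
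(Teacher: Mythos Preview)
Your overall plan mirrors the paper's proof closely: compute the projection, unpack $\xi_{ij}(v)$ componentwise via the block-averaged distances, locate the optimal $v$ at the intersection of the two extremal components, and minimise over $(i,j)$ using monotonicity in $\treedist(i,j)$.  The projection step, the same-block case, and the adjacent-block case are all handled correctly.

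The gap is in your treatment of the components $c\notin B(i)\cup B(j)$ for \emph{non-adjacent} different-block pairs.  You argue that these can be ignored because ``the simplified bound is already strictly larger so such terms cannot affect the min.''  This reasoning runs in the wrong direction.  Dropping the outside components can only make $\xi_{ij}(v)$ smaller (it is a maximum over $c$), hence makes the per-pair quantity $\max_v(\epsilon v-\xi_{ij}(v))_+^2/(2\outputvarcard\|\proj\Delta_{ij}\|_2^2)$ \emph{larger}.  So the ``simplified'' value for a non-adjacent pair is an \emph{upper} bound on its true value, and knowing that this upper bound exceeds the (exact) adjacent-pair value does not prevent the true non-adjacent value from falling \emph{below} it and becoming the minimiser.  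Moreover, the lemma asserts the two-term formula for $\xi_{ij}(v)$ for \emph{every} $i\notin B(j)$, not only for the pair realising the minimum, so you must establish it in full generality regardless of the calibration bound.

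The paper closes this gap by proving directly that the outside components are dominated for \emph{all} different-block pairs, not just adjacent ones.  Concretely, it shows two comparison inequalities (for any $c_1\notin B(i)\cup B(j)$ and $c_2\in B(i)\cup B(j)$):
\[
|\avgtreedist(i,c_1)-\avgtreedist(j,c_1)|\le|\avgtreedist(i,c_2)-\avgtreedist(j,c_2)|,\qquad
|\treedist(i,c_1)-\treedist(j,c_1)|\le|\treedist(i,c_2)-\treedist(j,c_2)|,
\]
where $\avgtreedist(i,c):=\tfrac{1}{|B(i)|}\sum_{l\in B(i)}\treedist(l,c)$.  It then argues, using these two inequalities together with the zero locations of the linear-in-$v$ components, that the outside components are pointwise dominated (by the $c=i$ component for $v\ge 1$ and by the farthest-in-block component for $v\le 1$).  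This yields the two-term formula for $\xi_{ij}(v)$ for every different-block pair, after which your monotonicity argument and the intersection-point computation go through verbatim.
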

\begin{proof}
For brevity, we shortcut the notation $\scorematrix_{\text{\textup{tree}}, \consistencydepth}$ to $\scorematrix$, $\scoresubset_{\text{\textup{tree}}, \consistencydepth}$ to $\scoresubset$ and $\lossmatrix_{\text{\textup{tree}}}$ to $\lossmatrix$.
First, we compute the projection operator $\proj_{\scoresubset} \unit_i$ and the lower-bound denominator $2 \outputvarcard \| \proj_{\scoresubset} \Delta_{ij} \|_2^2$.
Recall, that the subspace of allowed scores $\scoresubset$ defined as $\colspace{ \{ \sum_{l \in B(j)} \unit_l | j = 1, \dots, \outputvarcard \} }$ is of dimension~$\numblocks$.
The vector $\unit_i$ is orthogonal to the $\numblocks - 1$ different vectors $\sum_{l \in B(j)} \unit_l$, $j \not\in B(i)$, thus the projection $\proj_{\scoresubset} \unit_i$ equals the projection of $ \unit_i$ on the vector $\sum_{l \in B(i)} \unit_l$, which equals $\frac{1}{\blocksize} \sum_{l \in B(i)} \unit_l$ with $\blocksize:= B(i) = \frac{\outputvarcard}{\numblocks}$.
The projection square norm~$\|\proj_{\scoresubset} \Delta_{ij} \|_2^2$ equals $\frac{2}{\blocksize} = \frac{2\numblocks}{\outputvarcard}$ if $i \notin B(j)$ and $0$ if $i \in B(j)$.

Next, we compute $\xi_{ij}(v)$ defined as $\| \lossmatrix^\transpose (v I - \proj_\scoresubset) \Delta_{ij} \|_{\infty}$.
By definition of the loss function, the element of the loss matrix $\lossmatrix_{ci} = (\lossmatrix \unit_i)_c$ equals the tree distance from the leaf $i$ to the leaf $c$.
The projection operator~$\proj_{\scoresubset} \unit_i$ equals the vector $\frac{1}{\blocksize} \sum_{l \in B(i)} \unit_l$, therefore $(\lossmatrix \proj_\scoresubset \unit_i)_c$ is equal to the average tree distance from the elements of the block $B(i)$ to $c$: $\frac{1}{\blocksize} \sum_{l \in B(i)} \lossmatrix_{lc}$, which we denote by $\avgtreedist(i, c)$.
Note that the average distance~$\avgtreedist(i, c)$ is equal for all the leaves $c$ that belong to the same block~$B(c)$.
With this notation, we have the following equality:
\begin{equation}\label{eq:infnorm_treeproof}
\| \lossmatrix^\transpose (vI - \proj_\scoresubset) \Delta_{ij} \|_\infty = \max_{c \in \outputdomain} | v(\treedist(i,c) - \treedist(j, c)) - (\avgtreedist(i, c) - \avgtreedist(j, c)) |.
\end{equation}
On the right-hand side, each component is the absolute value of a linear in $v$ function, which equals zero at $v = \frac{\avgtreedist(i, c) - \avgtreedist(j, c)}{\treedist(i, c) - \treedist(j, c)}$ and the absolute value of the slope equals $|\treedist(i, c) - \treedist(j, c)|$.
We consider the cases when the labels $i$ and $j$ are in the same and different blocks separately. 

If $i$ and $j$ are in the same block we have that~$\avgtreedist(i, c) = \avgtreedist(j, c)$ and, by the reverse triangle inequality, $|\treedist(i, c) - \treedist(j, c)| \leq \treedist(i, j)$ with the equality holding for $c=i$ or $c=j$, which implies that  $\xi_{ij}(v) = \|(\lossmatrix^\transpose (vI - \proj_\scoresubset) \Delta_{ij} \|_\infty = v \treedist(i, j)$ for $i \in B(j)$. 

Now, we study the second case where $i$ and $j$ are in different blocks. We first show that 
\begin{equation}\label{eq:treeLemmaInequality}
|\avgtreedist(i, c_1) - \avgtreedist(j, c_1)| \leq |\avgtreedist(i, c_2) - \avgtreedist(j, c_2)| \quad \text{if $c_1 \notin B(i) \cup B(j)$ and $c_2 \in B(i) \cup B(j)$}.
\end{equation}
This inequality is crucial for the proof and holds due to the restriction on tree weights and node degrees.

In this paragraph, we will show that the left-hand side of the inequality~\eqref{eq:treeLemmaInequality} achieves its maximum when $c_1\notin B(i) \cup B(j)$ is in the block closest to $B(j)$ (or, due to the loss symmetries, in the block closest to $B(i)$).
If the lowest common ancestor of $i$ and $j$ is not an ancestor of $c_1$ the difference of the average distances equals zero due to equality of the paths from the lowest common ancestor to $i$ and $j$.
Otherwise, there exists $c_1 \notin B(i) \cup B(j)$ such that the lowest common ancestor of $i$ and $j$ is an ancestor of $c_1$.
Then, $\avgtreedist(j, c_1)$ is minimized and  $\avgtreedist(i, c_1)$ is simultaneously maximized for a component $c_1$ closest to the block $B(j)$.
In this case, the left-hand side maximum value equals $\avgtreedist(i, j) - \min_{c \notin B(j)} \avgtreedist(j, c)$ because $\avgtreedist(i, j)=\avgtreedist(i, c_1)$.

The right-hand side of the inequality~\eqref{eq:treeLemmaInequality} is the same for any choice of $c_2 \in B(i) \cup B(j)$ and is equal to $\avgtreedist(i, j) - \avgtreedist(j, c_2)$ for some $c_2 \in B(j)$.
Since the average distance within the block is smaller than the average distance to any node outside of the block, i.e., $\min_{c \notin B(j)} \avgtreedist(j, c) \geq \avgtreedist(j, c_2)$ for $c_2 \in B(j)$, the inequality~\eqref{eq:treeLemmaInequality} holds. The same arguments also show that 
\begin{equation} \label{eq:treeLemmaInequality2}
|\treedist(i, c_1) - \treedist(j, c_1)| \leq |\treedist(i, c_2) - \treedist(j, c_2)| \quad \text{if $c_1 \notin B(i) \cup B(j)$ and $c_2 \in B(i) \cup B(j)$}.
\end{equation}

Recall that in our case the infinity norm in~\eqref{eq:infnorm_treeproof} equals the component-wise maximum of the absolute values of the linear functions of $v$.
We will show below that for a small enough $v$ the maximum is achieved at the components that have the smallest slope $|\treedist(i, c) - \treedist(j, c)|$ among the ones with the largest offset $|\avgtreedist(i, c) - \avgtreedist(j, c)|$ and from some point for larger values of $v$ the maximum is achieved at the components with the steepest slope (see Figure~\ref{fig:treeillustration} right for the illustration).

Consider a leaf $c_2 \in B(i)$ farthest from the leaf $i$, i.e., $c_2 \in \argmax_{c \in B(i)} \treedist(i, c)$ (defines the green line in Figure~\ref{fig:treeillustration} right).
The offset $|\avgtreedist(i, c_2) - \avgtreedist(j, c_2)|$ is the same for all $c_2 \in B(i)$ and, by~\eqref{eq:treeLemmaInequality}, is larger than the offsets of the components $c_1 \notin B(i) \cup B(j)$.
The slope $|\treedist(i, c_2) - \treedist(j, c_2)|$ is the smallest among the components in $B(i) \cup B(j)$.
The component $c_2$ of $\lossmatrix^\transpose (v \id - \proj_\scoresubset)\Delta_{ij}$ equals zero for $v^*_{c_2} := \frac{\avgtreedist(j, c_2) - \avgtreedist(i, c_2)}{\treedist(j, c_2) - \treedist(i, c_2)} = \frac{\treedist(i, j) - \avgtreedist(i, c_2)}{\treedist(i, j) - \treedist(i, c_2)}$, where $v^*_{c_2} > 1$ by definition of~$c_2$, i.e., because $\treedist(i, c_2)$ is the maximal distance, which is not smaller than the average  distance  $\avgtreedist(i, c_2)$.
Finally, for $v \leq 1$ this component has higher values than the values of the components $c \notin B(i) \cup B(j)$.
Indeed, the latter are equal to zero at $v = 1$ and have smaller offset at $v = 0$ (thin brown lines in Figure~\ref{fig:treeillustration} right for $v < 1$).

The component $i$ of $\lossmatrix^\transpose (v \id - \proj_\scoresubset) \Delta_{ij}$ has the steepest slope $| \treedist(i, j) - \treedist(i, i)| = \treedist(i, j)$ and the same offset as in the previous paragraph $|\avgtreedist(i, i) - \avgtreedist(i, j)| = |\avgtreedist(i, c_2) - \avgtreedist(j, c_2)|$ (defines the blue line in Figure~\ref{fig:treeillustration} right).
The component equals zero for $v^*_{i} := \frac{\avgtreedist(j, i) - \avgtreedist(i, i)}{\treedist(j, i) - \treedist(i, i)} = \frac{\treedist(j, i) - \avgtreedist(i, i)}{\treedist(j, i)}$, where $v^*_{i} \leq 1$.
As a result, the component~$i$ has higher values than the components $c \notin B(i) \cup B(j)$, since they have smaller slope $|\treedist(i, c) - \treedist(i, c)|$ (due to the inequality \eqref{eq:treeLemmaInequality2}) and equal zero for $v = 1$ (thin brown lines in Figure~\ref{fig:treeillustration} right for $v > 1$).

Since all the components $c \in B(i) \cup B(j)$ have the same offset, the maximum is achieved either at~$c_2$ or at $i$:
\begin{equation}
\| \lossmatrix^\transpose(v \id - \proj_\scoresubset) \Delta_{ij} \|_\infty = \max\{|v\treedist(i, j) - (\treedist(i, j) - \avgblockdist)|, |v(\treedist(i, j) - \blockdist) - (\treedist(i, j) - \avgblockdist)|\},
\end{equation}
where $\blockdist := \treedist(i, c_2)$ is the maximal distance within a block and $\avgblockdist := \avgtreedist(i, c_2)$ is the average distance within a block.

Next, we compute $\max_{v \geq 0} (\epsilon v - \xi_{ij}(v))^2_+$.
If $i$ and $j$ are in the same block we have $(\epsilon v - \xi_{ij}(v))^2_+ = (v(\epsilon - \treedist(i, j)))^2_+$, which equals zero for $\epsilon \leq \treedist(i, j)$ and $+\infty$ otherwise. 
If $i$ and $j$ are in different blocks we have the maximum value of $(\epsilon v - \xi_{ij}(v))^2_+$ equal to $+\infty$ when $\epsilon > \treedist(i, j)$.
In the case when $\epsilon \leq \treedist(i, j)$, the maximum is achieved at the intersection point $v \treedist(i, j) - (\treedist(i, j) - \avgblockdist) = -v (\treedist(i, j) - \blockdist) + (\treedist(i, j) - \avgblockdist)$, $v=\frac{2(\treedist(i, j) - \avgtreedist)}{2\treedist(i, j) - \blockdist}$.
The maximum value is positive if and only if $\epsilon > \frac{\blockdist}{2}$, so for $\eps \leq \treedist(i, j)$ we obtain
\begin{equation}
\max_{v \geq 0} (\epsilon v - \xi_{ij}(v))^2_+ = \frac{(\treedist(i, j) - \avgblockdist)^2}{(\treedist(i, j) - \frac{\blockdist}{2})^2}(\epsilon - \frac{\blockdist}{2})_+^2
\end{equation}
and~$+\infty$ otherwise.

Finally, to get the actual lower bound on the calibration function, we compute the minimum with respect to all labels $\min_{i \neq j} \max_{v \geq 0} (\epsilon v - \xi_{ij}(v))^2_+$.
When $i$ and $j$ are in the same block, they deliver minimum value $0$ for $\varepsilon \leq \treedist(i, j)$ and the maximum value of $\treedist(i, j)$ within a block equals $\blockdist$ by definition of~$\blockdist$.
For $\eps > \blockdist$, the minimum is delivered by $i$ and $j$ in different blocks.
For the average distance within the block, we have $\avgblockdist \geq \frac{\blockdist}{2}$ for the trees with the number of children and the weights of edges equal at the same depth level, therefore the outer minimum w.r.t.~$i$ and $j$ is achieved at the smallest distance between two blocks $\minblockdist := \min_{i \notin B(j)} \treedist(i, j) > \blockdist$. As a result, we obtain the bound
\begin{equation}
\calibrationfunc_{\surrogatelossquad, \lossmatrix_{\text{\textup{tree}}}, \scoresubset_{\text{\textup{tree}}, \consistencydepth}}(\epsilon) \geq \lbrack \epsilon > \blockdist \rbrack \frac{(\minblockdist - \avgblockdist)^2}{(\minblockdist - \frac{\blockdist}{2})^2} \frac{(\epsilon - \frac{\blockdist}{2})_+^2}{4\numblocks},
\end{equation}
which completes the proof.
\end{proof}

In the next lemma, we compute the quantities $\blockdist$, $\avgblockdist$, $\minblockdist$ using the tree weights $\{ \frac12\treeweight_s \}_{s=0}^{\treedepth - 1}$ to finish the computation of the bound~\eqref{eq:consistencypaperboundgeneralized} of the main paper.
\begin{lemma}\label{th:lemmaTree}
For a particular consistency depth $\consistencydepth$ and the corresponding subspace $\scoresubset_{\text{\textup{tree}}, \consistencydepth}$, the maximum distance within an arbitrary block $\blockdist_\consistencydepth$, the minimum distance between a leaf in a block and a leaf outside the block $\minblockdist_\consistencydepth$ and the average distance within a block $\avgblockdist_\consistencydepth$ can be computed as follows:
\begin{align}
\blockdist_\consistencydepth &= \max_{i \in B(j)} \treedist(i, j) = \sum_{s = \consistencydepth}^{\treedepth - 1} \treeweight_s \\
\minblockdist_\consistencydepth &= \min_{i \notin B(j)} \treedist(i, j) = \sum_{s = \consistencydepth - 1}^{\treedepth - 1} \treeweight_s \\
\avgblockdist_\consistencydepth &= \frac{1}{|B(j)|} \sum_{i \in B(j)} \treedist(i, j) = \sum_{s=\consistencydepth}^{\treedepth - 1} \alpha_s \frac{ (\prod\nolimits_{s'=\consistencydepth}^{s} \nchildren_{s'} ) - 1}{\prod\nolimits_{s'=\consistencydepth}^{s} \nchildren_{s'}}.
\end{align}
\end{lemma}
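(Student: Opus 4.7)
The plan is to do a direct combinatorial calculation on the regular tree, keyed by the depth of the least common ancestor (LCA) of the pair of leaves in question. The key observation that drives everything is that the leaf-to-leaf distance depends only on the LCA depth: if leaves $i$ and $j$ have their LCA at depth $s \in \{0,\dots,\treedepth\}$, the two upward paths each have length $\sum_{s'=s}^{\treedepth-1}\treeweight_{s'}/2$, so $\treedist(i,j)=\sum_{s'=s}^{\treedepth-1}\treeweight_{s'}$ (and $0$ when $s=\treedepth$, i.e.\ $i=j$). This single identity turns everything into bookkeeping on the integer $s$.

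For $\blockdist_\consistencydepth$ and $\minblockdist_\consistencydepth$ the argument is then immediate. Membership $i\in B(j)$ is equivalent to $\mathrm{LCA}(i,j)$ lying at depth $\geq\consistencydepth$, so the maximum distance inside the block is attained exactly when this depth equals $\consistencydepth$, giving the tail sum $\sum_{s=\consistencydepth}^{\treedepth-1}\treeweight_s$. Symmetrically, $i\notin B(j)$ forces $\mathrm{LCA}(i,j)$ to sit at depth $\leq\consistencydepth-1$, and the minimum distance is attained at depth exactly $\consistencydepth-1$, yielding $\sum_{s=\consistencydepth-1}^{\treedepth-1}\treeweight_s$.

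For the average, I would partition $B(j)$ by LCA depth. At depth $s\in\{\consistencydepth,\dots,\treedepth-1\}$, the number of $i\in B(j)\setminus\{j\}$ whose LCA with $j$ is exactly at depth $s$ equals $(\nchildren_s-1)\prod_{s'=s+1}^{\treedepth-1}\nchildren_{s'}$ (choose one of the $\nchildren_s-1$ siblings of the depth-$(s+1)$ ancestor of $j$, then any leaf under it), and $i=j$ contributes $0$. Dividing by $|B(j)|=\prod_{s=\consistencydepth}^{\treedepth-1}\nchildren_{s'}$ gives
\[
\avgblockdist_\consistencydepth
=\sum_{s=\consistencydepth}^{\treedepth-1}\frac{\nchildren_s-1}{\prod_{s'=\consistencydepth}^{s}\nchildren_{s'}}\sum_{s''=s}^{\treedepth-1}\treeweight_{s''}.
\]

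The one step that deserves real care, and the place where the stated closed form actually crystallizes, is the exchange of summation together with a telescoping identity. Writing $P_s:=\prod_{s'=\consistencydepth}^{s}\nchildren_{s'}$ with the convention $P_{\consistencydepth-1}=1$, one has $(\nchildren_s-1)/P_s = 1/P_{s-1}-1/P_s$. Swapping the order of summation over $s\leq s''$ and telescoping in $s$ then yields $\sum_{s=\consistencydepth}^{s''}(\nchildren_s-1)/P_s = (P_{s''}-1)/P_{s''}$, which after renaming $s''\mapsto s$ produces exactly the claimed expression with $\alpha_s$ (identified with $\treeweight_s$). I expect this telescoping to be the only non-mechanical part of the proof; the rest is case analysis on LCA depth and the counting argument above.
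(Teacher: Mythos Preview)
Your proof is correct; the argument for $\blockdist_\consistencydepth$ and $\minblockdist_\consistencydepth$ is essentially identical to the paper's. For $\avgblockdist_\consistencydepth$, however, you take a slightly different route. You partition $B(j)$ by the \emph{exact} LCA depth, which gives the count $(\nchildren_s-1)\prod_{s'=s+1}^{\treedepth-1}\nchildren_{s'}$ at level $s$, and then you need the telescoping identity $(\nchildren_s-1)/P_s = 1/P_{s-1}-1/P_s$ to collapse the resulting double sum. The paper instead writes $\treedist(i,j)=\sum_{s}\treeweight_s\,[\text{the path from $i$ to $j$ contains an edge of depth $s$}]$, which amounts to grouping by the \emph{cumulative} event ``LCA depth $\leq s$''. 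The corresponding count is $(\prod_{s'=\consistencydepth}^{s}\nchildren_{s'}-1)\prod_{s'=s+1}^{\treedepth-1}\nchildren_{s'}$, and after dividing by $|B(j)|$ the closed form drops out immediately with no telescoping. In effect, the paper bakes your telescoping step into the choice of indicator. Both arguments are elementary and of the same length; the paper's is marginally more direct, while yours makes the stratification by LCA depth more explicit.
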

\begin{proof}
The expressions for $\blockdist_\consistencydepth$ and $\minblockdist_\consistencydepth$ immediately follow from the definition of the distance $\treedist(i, j)$. 

To obtain the expression for $\avgtreedist_\consistencydepth$, we rewrite the distance $\treedist(i, j)$ between leaves $i$ and $j$ in the same block $B(j)$ as the weighted sum of indicators:
\begin{equation}
\treedist(i, j) = \sum_{s = \consistencydepth}^{\treedepth - 1} \treeweight_s [\text{path from $i$ to $j$ contains an edge of depth $s$}].
\end{equation}
Then, we fix a leaf $j$ and compute the number of paths from $j$ to the leaves in $B(j)$ that contain an edge of depth $s$.
Such paths go through the same node (the ancestor of $j$ at the depth $s$) on the way up from node $j$ and go through one of $\prod_{s' = \consistencydepth}^s \nchildren_{s'} - 1$ possible nodes at the depth~$s$ on the way down.
From each node of depth $s$, the path can further go to one of $\prod_{s'={s+1}}^{\treedepth - 1} \nchildren_{s'}$ leaves on the way down.
Therefore, there are $\left(\prod_{s' = \consistencydepth}^s \nchildren_{s'} - 1\right)\left(\prod_{s' = s + 1}^{\treedepth - 1} \nchildren_{s'}\right)$ paths that contain an edge of depth $s$.

Next, we rewrite $\sum_{i \in B(j)} \treedist(i, j)$ using the indicator notation and compute the sum:
\begin{align}
\sum_{i \in B(j)} \treedist(i, j) &= \sum_{s=\consistencydepth}^{\treedepth - 1} \sum_{i \in B(j)} \alpha_s [\text{path from $j$ to $i$ contains an edge of depth $s$}] \\
&= \sum_{s=\consistencydepth}^{\treedepth - 1} \treeweight_s\left(\prod_{s' = \consistencydepth}^s \nchildren_{s'} - 1\right)\left(\prod_{s' = s + 1}^{\treedepth - 1} \nchildren_{s'}\right).
\end{align}
Since the number of leaves in a block is $\prod_{s'=\consistencydepth}^{\treedepth - 1} \nchildren_{s'}$, we have
\begin{equation}
\avgblockdist_\consistencydepth = \frac{1}{|B(j)|} \sum_{i \in B(j)} \treedist(i, j) = \sum_{s=\consistencydepth}^{\treedepth - 1} \alpha_s \frac{ (\prod\nolimits_{s'=\consistencydepth}^{s} \nchildren_{s'} ) - 1}{\prod\nolimits_{s'=\consistencydepth}^{s} \nchildren_{s'}},
\end{equation}
which finishes the proof.
\end{proof}
Note that for tree-depth $\treedepth=2$ the minimum $\minblockdist_1$ equals $\treeweight_0 + \treeweight_1 = 1$. As a result, our calibration function lower bound coincides with the exact calibration function from \cite{osokin17consistency}.

\section{Derivations for the Mean Average Precision Loss} \label{sec:rankingProofs}
In this section, we prove several statements about $\scorematrix_{\text{\textup{sort}}}$ and $\lossmatrix_{\text{\textup{mAP}}}$, which are used in Section~\ref{sec:rankingLosses}.
\begin{lemma}\label{th:rankingprojection}
The matrix $\scorematrix_{\text{\textup{sort}}}^\transpose \scorematrix_{\text{\textup{sort}}}$ has the following form:
\begin{equation}
(\scorematrix_{\text{\textup{sort}}}^\transpose \scorematrix_{\text{\textup{sort}}})_{pq} = \begin{cases} 
(\size - 1)! H_{\size, 2}, & p = q, \\
(\size - 2)! (H_{\size, 1}^2 - H_{\size, 2}), & p \neq q,
\end{cases}
\end{equation}
where $H_{n, m} := \sum_{k=1}^n \frac{1}{k^m}$ is the generalized harmonic number of order $m$ of $n$. As a result, for distinct permutations $\pi$ and $\omega$, the square norm of the projection is equal to
\begin{equation}
\| \proj_{\scoresubset_{\text{\textup{sort}}}} \Delta_{\pi \omega} \|_2^2 = \frac{1}{(\size - 2)! (\size H_{\size, 2} - H_{\size, 1}^2)} \sum_{p=1}^{\size} \left(\frac{1}{\pi(p)} - \frac{1}{\omega(p)}\right)^2.
\end{equation}
The condition number~$\condnum(\scorematrix_{\text{\textup{sort}}})$ equals~$\frac{\sqrt{\size-1} H_{\size, 1}}{\sqrt{ \size H_{\size, 2} - H_{\size, 1}^2}\vphantom{(\big(\bigr)}}$.
\end{lemma}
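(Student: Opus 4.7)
The proof naturally decomposes into three parts corresponding to the three claims. The plan is to (1) compute $\scorematrix_{\text{\textup{sort}}}^\transpose \scorematrix_{\text{\textup{sort}}}$ by a counting argument, (2) exploit its scalar-plus-rank-one structure together with a zero-sum property of $\scorematrix_{\text{\textup{sort}}}^\transpose \Delta_{\pi\omega}$ to compute the projection norm without explicit inversion, and (3) read off the condition number from the two distinct eigenvalues of the Gram matrix.

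For step (1), I would start from $(\scorematrix_{\text{\textup{sort}}}^\transpose \scorematrix_{\text{\textup{sort}}})_{pq} = \sum_{\sigma \in \mathcal{S}_\size} \frac{1}{\sigma(p)\sigma(q)}$. On the diagonal ($p = q$), for each $k \in \{1,\dots,\size\}$ there are $(\size-1)!$ permutations with $\sigma(p) = k$, so the sum equals $(\size-1)! \sum_k 1/k^2 = (\size-1)! H_{\size, 2}$. Off the diagonal ($p \neq q$), for each ordered pair $(k,l)$ with $k \neq l$ there are $(\size-2)!$ permutations with $\sigma(p) = k$ and $\sigma(q) = l$, so the sum equals $(\size-2)! \sum_{k \neq l} \frac{1}{kl} = (\size-2)!\bigl( H_{\size,1}^2 - H_{\size,2}\bigr)$, where the identity $\sum_{k \neq l} 1/(kl) = (\sum_k 1/k)^2 - \sum_k 1/k^2$ is used.

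For step (2), the resulting Gram matrix has the structure $\scorematrix_{\text{\textup{sort}}}^\transpose \scorematrix_{\text{\textup{sort}}} = \alpha\,\id_\size + \beta\,\one_\size \one_\size^\transpose$ with $\alpha = (\size-2)!\bigl( \size H_{\size,2} - H_{\size,1}^2 \bigr)$ and $\beta = (\size-2)!\bigl( H_{\size,1}^2 - H_{\size,2} \bigr)$. This matrix has eigenvalue $\alpha + \size\beta = (\size-1)! H_{\size,1}^2$ on $\textup{span}(\one_\size)$ and eigenvalue $\alpha$ on its orthogonal complement (with multiplicity $\size-1$). The key observation is that the vector $v := \scorematrix_{\text{\textup{sort}}}^\transpose \Delta_{\pi\omega}$, whose $p$-th coordinate is $1/\pi(p) - 1/\omega(p)$, satisfies $\one_\size^\transpose v = H_{\size,1} - H_{\size,1} = 0$, so $v$ lies in the $\alpha$-eigenspace. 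Hence $\|\proj_{\scoresubset_{\text{\textup{sort}}}} \Delta_{\pi\omega}\|_2^2 = v^\transpose (\scorematrix_{\text{\textup{sort}}}^\transpose \scorematrix_{\text{\textup{sort}}})^\pinv v = \|v\|_2^2/\alpha$, which after substituting $\alpha$ matches the claimed formula.

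For step (3), the two distinct singular values of $\scorematrix_{\text{\textup{sort}}}$ are $\sqrt{(\size-1)!}\,H_{\size,1}$ and $\sqrt{(\size-2)!\bigl(\size H_{\size,2} - H_{\size,1}^2\bigr)}$. Cauchy--Schwarz applied to the vectors $(1,\dots,1)$ and $(1,\tfrac12,\dots,\tfrac1\size)$ gives $H_{\size,1}^2 \leq \size H_{\size,2}$ (strictly for $\size \geq 2$), so both singular values are positive, and a direct comparison confirms the former is the largest. Dividing yields exactly $\condnum(\scorematrix_{\text{\textup{sort}}}) = \sqrt{\size-1}\,H_{\size,1}/\sqrt{\size H_{\size,2} - H_{\size,1}^2}$. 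The entire argument is essentially a routine calculation; the only conceptual step is the recognition of the scalar-plus-rank-one structure combined with the coordinate-sum identity $\one_\size^\transpose v = 0$, which together bypass any explicit matrix inversion.
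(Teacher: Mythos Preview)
Your proposal is correct and follows essentially the same route as the paper: the same counting argument for the Gram matrix entries, the same scalar-plus-rank-one decomposition, the same key observation that $\one_\size^\transpose \scorematrix_{\text{\textup{sort}}}^\transpose \Delta_{\pi\omega}=0$, and the same identification of the two distinct eigenvalues. The only cosmetic differences are that the paper phrases the inversion step via the Sherman--Morrison--Woodbury formula rather than via the eigenspace argument you use, and that you add the Cauchy--Schwarz check that $H_{\size,1}^2 \leq \size H_{\size,2}$ to verify positivity and the ordering of the singular values, which the paper leaves implicit.
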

\begin{proof}
By definition, $(\scorematrix_{\text{\textup{sort}}}^\transpose \scorematrix_{\text{\textup{sort}}})_{pq} = \sum_{\sigma \in S_\size} \frac{1}{\sigma(p) \sigma(q)}$. We can rewrite the sum as the sum over the permutations with fixed values $\sigma(p)$ and $\sigma(q)$ and then sum over the fixed values. Therefore, the sum is equal to $(\size - 1)! H_{\size, 2}$ when $p=q$ and is equal to $(\size - 2)! (H_{\size, 1}^2 - H_{\size, 2})$ otherwise.

We now have $\scorematrix_{\text{\textup{sort}}}^\transpose \scorematrix_{\text{\textup{sort}}} = (\size - 2)! (\size H_{\size, 2} - H_{\size, 1}^2) \id_{\size} + (\size - 2)! (H_{\size, 1}^2 - H_{\size, 2}) {\bf 1}{\bf 1}^\transpose$. The Sherman-Woodbury formula for the matrix inversion gives us the sum of the scalar matrix $\frac{1}{(\size - 2)! (\size H_{\size, 2} - H_{\size, 1})} \id_{\size}$ and the constant matrix. Since ${\bf 1}^{\transpose} \scorematrix_{\text{\textup{sort}}}^\transpose \Delta_{\pi\omega} = \sum_{p=1}^{\size} \left(\frac{1}{\pi(p)} - \frac{1}{\omega(p)} \right) = 0$, the square norm of the projection equals $\frac{1}{(\size - 2)! (\size H_{\size, 2} - H_{\size, 1}^2)} \Delta_{\pi\omega}^\transpose \scorematrix_{\text{\textup{sort}}} \scorematrix_{\text{\textup{sort}}}^\transpose \Delta_{\pi\omega} = \frac{1}{(\size - 2)! (\size H_{\size, 2} - H_{\size, 1}^2)} \sum_{p = 1}^{\size} \left( \frac{1}{\pi(p)} - \frac{1}{\omega(p)} \right)^2$.

The condition number of~$\scorematrix_{\text{\textup{sort}}}$ equals the square root of the ratio between the maximal and minimal eigenvalues of~$\scorematrix_{\text{\textup{sort}}}^\transpose \scorematrix_{\text{\textup{sort}}}$.
Subtracting~$(\size-2)! (\size H_{\size, 2} - H_{\size, 1}^2) \id_\size$ from~$\scorematrix_{\text{\textup{sort}}}^\transpose \scorematrix_{\text{\textup{sort}}}$ we get a matrix of rank~1, which means that $\size - 1$ eigenvalues of~$\scorematrix_{\text{\textup{sort}}}^\transpose \scorematrix_{\text{\textup{sort}}}$ equal $(\size-2)! (\size H_{\size, 2} - H_{\size, 1}^2)$.
The remaining eigenvalue corresponds to the eigenvector ${\bf 1}$ and equals $(\size - 1)! H_{\size, 1}^2$. With these eigenvalues we get the condition number~$\condnum(\scorematrix_{\text{\textup{sort}}}) = \tfrac{\sqrt{\size - 1} H_{\size, 1}}{\sqrt{\size H_{\size, 2} - H_{\size, 1}^2}}$.
\end{proof}

\begin{lemma}\label{th:rankingxijv}
The matrix $\lossmatrix_{\text{\textup{mAP}}}^\transpose \scorematrix_{\text{\textup{sort}}}$ has the following form:
\begin{equation}
\Bigl( \lossmatrix_{\text{\textup{mAP}}}^\transpose \scorematrix_{\text{\textup{sort}}} \Bigr)_{\gtvar, p} = \begin{cases}
\alpha(|\gtvarv|), & \gtvar_p = 1, \\
\beta(|\gtvarv|), & \gtvar_p = 0.
\end{cases}
\end{equation}
That is, for each ground-truth value $\gtvarv$ the matrix row components have only two values that depend on the Hamming norm $|\gtvarv| : = \sum_{p=1}^{\size} \gtvar_p$, specifically:
\begin{align}
\alpha(|\gtvarv|) = 
& \mathfrak{A}_{\size} \left(1 - \frac{|\gtvarv| - 1}{\size - 2} \left(1 - \frac{\size}{|\gtvarv| (\size - 1)}\right)\right)
- \mathfrak{B}_{\size} \left(\frac{3}{2} \frac{|\gtvarv| - 1}{\size - 2} \frac{\size - |\gtvarv|}{|\gtvarv|} \right)
- \mathfrak{C}_{\size} \left(\frac{\size - |\gtvarv|}{|\gtvarv| (\size - 1)} \right), \\
\beta(|\gtvarv|) = 
& \mathfrak{A}_{\size} \left(1 - \frac{|\gtvarv| - 1}{\size - 2} \right)
- \mathfrak{B}_{\size} \left(1 - \frac{3}{2} \frac{|\gtvarv| - 1}{\size - 2} \right).
\end{align}
Here $\mathfrak{A}_{\size} = (\size - 1)! H_{\size, 1}$, $\mathfrak{B}_{\size} = (\size - 2)!(H_{\size, 1}^2 - H_{\size, 2})$, $\mathfrak{C}_{\size} = (\size - 1)! H_{\size, 2}$. As a result, for permutations $\pi$ and $\omega$, we obtain
\begin{equation}
\bigl(\lossmatrix_{\text{\textup{mAP}}}^\transpose \proj_{\scoresubset_{\text{\textup{sort}}}} \Delta_{\pi \omega}\bigr)_\gtvarv = \gamma(|\gtvarv|) \left((\scorematrix_{\text{\textup{sort}}} \gtvarv)_\pi - (\scorematrix_{\text{\textup{sort}}} \gtvarv)_\omega \right),
\end{equation}
where $\gamma(p) = \frac{\alpha(p) - \beta(p)}{(\size - 2)!\left(\size H_{\size, 2} - H_{\size, 1}^2 \right)}$.
\end{lemma}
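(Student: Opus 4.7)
The lemma has three parts: (i) each row of $\lossmatrix_{\text{\textup{mAP}}}^\transpose \scorematrix_{\text{\textup{sort}}}$ takes only two values, determined by $|\gtvarv|$ and $\gtvar_p$; (ii) explicit closed forms $\alpha(|\gtvarv|)$ and $\beta(|\gtvarv|)$ for these two values; (iii) a factorization of $(\lossmatrix_{\text{\textup{mAP}}}^\transpose \proj_{\scoresubset_{\text{\textup{sort}}}} \Delta_{\pi\omega})_{\gtvarv}$. The plan is to prove (i) by a symmetry argument, compute (ii) by expanding the mAP loss and evaluating a family of permutation sums via case analysis, and obtain (iii) from (i)--(ii) together with the Sherman--Morrison structure of $\scorematrix_{\text{\textup{sort}}}^\transpose \scorematrix_{\text{\textup{sort}}}$ already available from Lemma~\ref{th:rankingprojection}.

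\textbf{Symmetry (part i).} The mAP loss is equivariant under the diagonal action of $S_\size$ on items: $\lossmatrix_{\text{\textup{mAP}}}(\sigma\circ\pi^{-1}, \gtvarv\circ\pi^{-1}) = \lossmatrix_{\text{\textup{mAP}}}(\sigma, \gtvarv)$ for every $\pi \in S_\size$, which is immediate from the symmetric form~\eqref{eq:mapLoss}. For any $\pi$ fixing $p$, substituting $\sigma \to \sigma\circ\pi$ in the sum defining $(\lossmatrix_{\text{\textup{mAP}}}^\transpose \scorematrix_{\text{\textup{sort}}})_{\gtvarv, p}$ and using $\sigma(p) = (\sigma\circ\pi)(p)$ shows the entry is unchanged when $\gtvarv$ is replaced by $\gtvarv\circ\pi^{-1}$. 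Since the stabilizer of $p$ acts transitively on binary patterns on the remaining $\size-1$ coordinates of fixed Hamming weight, the entry depends only on $(|\gtvarv|, \gtvar_p)$.

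\textbf{Closed forms (part ii).} Fix the canonical representative $\gtvarv$ with $m = |\gtvarv|$ ones in positions $1,\dots,m$, so that $\alpha(m)$ and $\beta(m)$ arise from $p \le m$ and $p > m$. Substituting the second form of~\eqref{eq:mapLoss} and separating the diagonal contribution yields
\begin{equation*}
(\lossmatrix_{\text{\textup{mAP}}}^\transpose \scorematrix_{\text{\textup{sort}}})_{\gtvarv, p}
= \sum_\sigma \tfrac{1}{\sigma(p)}
- \tfrac{1}{m}\sum_{p'=1}^m \sum_\sigma \tfrac{1}{\sigma(p)\sigma(p')}
- \tfrac{1}{m}\sum_{1\le q'<p'\le m}\sum_\sigma \tfrac{1}{\sigma(p)\max(\sigma(p'),\sigma(q'))}.
\end{equation*}
Each inner permutation sum reduces via the standard identity $\sum_{\sigma\in S_\size} f(\sigma(j_1),\ldots,\sigma(j_r)) = (\size-r)!\sum_{i_1,\ldots,i_r\text{ distinct}} f(i_1,\ldots,i_r)$, so one splits according to whether $p$ coincides with $p'$, with $q'$, or with neither. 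The required building blocks are $\sum_{i\neq j}\tfrac{1}{ij} = H_{\size,1}^2 - H_{\size,2}$, $\sum_{i\neq j}\tfrac{1}{i\max(i,j)} = H_{\size,1} + \tfrac{1}{2}H_{\size,1}^2 - \tfrac{3}{2}H_{\size,2}$, and the three-index sum $\sum_{i,j,k\text{ distinct}}\tfrac{1}{i\max(j,k)}$, which I would compute by symmetrizing in $(j,k)$ and invoking the previous two. Counting the pairs $(p',q')$ in each coincidence class separately for $p\le m$ and $p>m$ and collecting the coefficients of $\mathfrak{A}_\size, \mathfrak{B}_\size, \mathfrak{C}_\size$ then yields the stated expressions.

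\textbf{Projection identity (part iii).} Lemma~\ref{th:rankingprojection} gives $\scorematrix_{\text{\textup{sort}}}^\transpose \scorematrix_{\text{\textup{sort}}} = (\size-2)!(\size H_{\size,2}-H_{\size,1}^2)\id_\size + \mathfrak{B}_\size \mathbf{1}\mathbf{1}^\transpose$ and $\mathbf{1}^\transpose \scorematrix_{\text{\textup{sort}}}^\transpose \Delta_{\pi\omega} = 0$. By Sherman--Morrison the inverse is a linear combination of $\id_\size$ and $\mathbf{1}\mathbf{1}^\transpose$; applying it to $\scorematrix_{\text{\textup{sort}}}^\transpose \Delta_{\pi\omega}$ annihilates the rank-one term and leaves $[(\size-2)!(\size H_{\size,2}-H_{\size,1}^2)]^{-1}\scorematrix_{\text{\textup{sort}}}^\transpose \Delta_{\pi\omega}$. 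By part~(ii) the $\gtvarv$-th row of $\lossmatrix_{\text{\textup{mAP}}}^\transpose \scorematrix_{\text{\textup{sort}}}$ equals $\beta(|\gtvarv|)\mathbf{1}^\transpose + (\alpha(|\gtvarv|)-\beta(|\gtvarv|))\gtvarv^\transpose$, and the $\mathbf{1}^\transpose$ part is again killed, giving $(\lossmatrix_{\text{\textup{mAP}}}^\transpose \proj_{\scoresubset_{\text{\textup{sort}}}}\Delta_{\pi\omega})_{\gtvarv} = \gamma(|\gtvarv|)\gtvarv^\transpose \scorematrix_{\text{\textup{sort}}}^\transpose \Delta_{\pi\omega}$, which is the claim after identifying $\gtvarv^\transpose \scorematrix_{\text{\textup{sort}}}^\transpose \Delta_{\pi\omega} = (\scorematrix_{\text{\textup{sort}}}\gtvarv)_\pi - (\scorematrix_{\text{\textup{sort}}}\gtvarv)_\omega$. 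The main obstacle is the bookkeeping in part~(ii): each individual permutation sum is mechanical, but combining the contributions from the various coincidence cases into the compact two-value form of the statement requires careful algebra; part~(iii) is then essentially structural, powered by the identity $\mathbf{1}^\transpose \scorematrix_{\text{\textup{sort}}}^\transpose \Delta_{\pi\omega} = 0$ already isolated in Lemma~\ref{th:rankingprojection}.
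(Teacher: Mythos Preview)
Your proposal is correct and reaches the same result, but the route for parts~(i)--(ii) differs from the paper's in a way worth noting. The paper does not isolate a symmetry argument; it plugs in the reindexed form $\lossmatrix_{\text{mAP}}(\sigma,\gtvarv)=1-\tfrac{1}{|\gtvarv|}\sum_{p}\sum_{q\le p}\tfrac{\gtvar_{\sigma^{-1}(p)}\gtvar_{\sigma^{-1}(q)}}{p}$, groups the permutation sum by the value $\sigma(s)=t$, passes to inverse permutations, and then treats the two cases $\gtvar_s=0$ and $\gtvar_s=1$ via Iverson-bracket counting over the triple sum in $(t,p,q)$. The two-value structure thus \emph{emerges} from the calculation rather than being established a priori. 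Your equivariance argument for~(i) is cleaner and makes the structure transparent before any computation; your version of~(ii), based on the symmetric $\max$ form of the loss and the identity $\sum_{\sigma}f(\sigma(j_1),\dots,\sigma(j_r))=(\size-r)!\sum_{\text{distinct}}f$, trades the paper's case-split over $(t,p,q)$ for case-splitting on coincidences between $p$ and the pair $(p',q')$ plus one three-index distinct sum. Both routes are mechanical once set up; yours front-loads the structural insight, the paper's keeps all indices ``rank-side'' so that only ordinary harmonic sums appear and no $\max$ needs to be unpacked. Part~(iii) of your argument coincides with the paper's almost verbatim: Sherman--Morrison on $\scorematrix_{\text{sort}}^\transpose\scorematrix_{\text{sort}}$, the annihilation $\mathbf{1}^\transpose\scorematrix_{\text{sort}}^\transpose\Delta_{\pi\omega}=0$, and the row decomposition $\beta(|\gtvarv|)\mathbf{1}^\transpose+(\alpha-\beta)\gtvarv^\transpose$.
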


\begin{proof}
For brevity, here we denote~$\scorematrix_{\text{\textup{sort}}}$ by~$\scorematrix$, $\scoresubset_{\text{\textup{sort}}}$ by~$\scoresubset$ and $\lossmatrix_{\text{\textup{mAP}}}$ by~$\lossmatrix$.
Following the definitions of $\lossmatrix$ and $\scorematrix$, we explicitly compute the components of $\lossmatrix^\transpose \scorematrix$:
\begin{equation}
\label{eq:proof_mAP_LF}
\bigl( \lossmatrix^\transpose \scorematrix \bigr)_{\gtvarv, s} = \sum_{\sigma \in S_\size} \left(1 - \frac{1}{\gtvarv}\sum_{p=1}^{\size} \sum_{q=1}^{p} \frac{\gtvar_{\sigma^{-1}(p)}\gtvar_{\sigma^{-1}(q)}}{p}\right) \frac{1}{\sigma(s)}.
\end{equation}

There are exactly $(r - 1)!$ permutations with one fixed element, so we have $\sum_{\sigma \in S_\size} \frac{1}{\sigma(s)} = (\size - 1)! \sum_{p=1}^\size \frac{1}{p} = (r - 1)! H_{\size, 1} =: \mathfrak{A}_{\size}$. To compute the remaining part of~\eqref{eq:proof_mAP_LF},  we group the permutation values $\sigma(k) = t$ by each $t=1, \dots, \size$ and move the sum over permutations inside the bracket:
\begin{equation}
-\frac{1}{\gtvarv}\sum_{\sigma \in S_\size}\sum_{p=1}^{\size} \sum_{q=1}^{p} \frac{\gtvar_{\sigma^{-1}(p)}\gtvar_{\sigma^{-1}(q)}}{p \sigma(s)} = -\frac{1}{\gtvarv} \sum_{t=1}^\size \sum_{p=1}^{\size}\sum_{q=1}^{p} \sum_{\sigma \in S_\size, \sigma(s) = t} \frac{\gtvar_{\sigma^{-1}(p)}\gtvar_{\sigma^{-1}(q)}}{pt}.
\end{equation}
Next, we compute the inner sum $\sum_{\sigma \in S_\size, \sigma(s) = t} \gtvar_{\sigma^{-1}(p)}\gtvar_{\sigma^{-1}(q)}$. We rewrite the sum as the sum over inverse permutations:
\begin{equation}
\sum_{\sigma \in S_\size, \sigma(s) = t} \gtvar_{\sigma^{-1}(p)}\gtvar_{\sigma^{-1}(q)} = \sum_{\pi \in S_\size, \pi(t) = s} \gtvar_{\pi(p)}\gtvar_{\pi(q)}.
\end{equation}

The number of positive terms is different for the two cases of $y_s = 0$ and $y_s = 1$. For $y_s = 0$, using the Iverson brackets the sum can be rewritten as follows:
\begin{equation}
\sum_{\pi \in S_\size, \pi(t) = s} \gtvar_{\pi(p)}\gtvar_{\pi(q)} = \lbrack p \neq t \rbrack \left( \lbrack q < p \;\&\; q \neq t \rbrack |\gtvarv| (|\gtvarv| - 1) (\size - 3)!  + \lbrack q = p \rbrack |\gtvarv| (\size - 2)! \right).
\end{equation}
We then sum the expression over $q = 1, \dots, p$:
\begin{align}
\sum_{q=1}^p \lbrack p \neq t \rbrack \left( \lbrack q < p, q \neq t \rbrack |\gtvarv| (|\gtvarv| - 1) (\size - 3)!  + \lbrack q = p \rbrack |\gtvarv| (\size - 2)! \right) = \\[-2mm]
\lbrack p \neq t \rbrack \left((p - 1 - \lbrack t < p \rbrack) |\gtvarv| (|\gtvarv| - 1) (\size - 3)! + |\gtvarv| (r - 2)! \right).
\end{align}
Finally, we multiply the expression by $\frac{-1}{|\gtvarv|pt}$ and compute the sum over $p$ and $t$:
\begin{align}
-\frac{1}{|\gtvarv|}\sum_{t=1}^{\size}\sum_{p=1}^{\size} \frac{1}{tp} \lbrack p \neq t \rbrack \left((p - 1 - \lbrack t < p \rbrack) |\gtvarv| (|\gtvarv| - 1) (\size - 3)! + |\gtvarv| (r - 2)! \right) = \\[-2mm]
-\mathfrak{A}_{\size} \frac{|\gtvarv| - 1}{\size - 2} + \mathfrak{B}_{\size}\left(1 - \frac{3}{2}\frac{|\gtvarv| - 1}{\size - 2} \right).
\end{align}
Combining with $\sum_{\sigma \in S_\size} \frac{1}{\sigma(s)} = \mathfrak{A}_{\size}$ we obtain the desired expression for $\beta(|\gtvarv|)$.

Now, consider the case of $y_s = 1$. Again, we rewrite the sum as
\begin{align}
\sum_{\pi \in S_\size, \pi(t) = s} \gtvar_{\pi(p)}\gtvar_{\pi(q)} & =
\lbrack p = t \rbrack \left(\lbrack q < p \rbrack (|\gtvarv| - 1) (\size - 2)! + \lbrack q = p \rbrack (\size - 1)! \right) \\[-4mm]
& + \lbrack p \neq t \rbrack \left( \lbrack q < p \;\&\; q = t \rbrack (|\gtvarv| - 1) (\size - 2)! \right) \\
& + \lbrack p \neq t \rbrack \left(\lbrack q < p \;\&\; q \neq t \rbrack (|\gtvarv| - 1)(|\gtvarv| - 2) (\size - 3)!\right) \\
& + \lbrack p \neq t \rbrack \left( \lbrack q = p \rbrack (|\gtvarv| - 1) (\size - 2)! \right),
\end{align}
sum it over $q$,
\begin{align}
\sum_{q=1}^{p} \sum_{\pi \in S_\size, \pi(t) = s} \gtvar_{\pi(p)}\gtvar_{\pi(q)} &= \lbrack p = t \rbrack \left((p - 1) (|\gtvarv| - 1) (\size - 2)! + (\size - 1)! \right) \\[-4mm]
& + \lbrack p \neq t \rbrack \left( \lbrack t < p \rbrack (|\gtvarv| - 1) (\size - 2)! \right) \\
& + \lbrack p \neq t \rbrack \left( (p - 1 - \lbrack t < q \rbrack )(|\gtvarv| - 1) (|\gtvarv| - 2) (\size - 3)! \right) \\
& + \lbrack p \neq t \rbrack (|\gtvarv| - 1) (\size - 2)!,
\end{align}
and obtain the desired expression by multiplying the latter by $\frac{-1}{|\gtvarv| p t}$ and summing over $p$ and $t$. The last step of the computation is completely analogous to the case $y_k=0$ and we omit it for brevity.

To compute $\bigl(\lossmatrix^T \proj_\scoresubset \Delta_{\pi \omega}\bigr)_\gtvarv$, we note that for all permutations $\pi$ and $\omega$ it holds that ${\bf 1}^\transpose \scorematrix^\transpose \Delta_{\pi \omega} = 0$. According to Lemma~\ref{th:rankingprojection} and the Sherman-Woodbury formula, $(\scorematrix^\transpose \scorematrix)^{-1}$ is the sum of the scalar matrix $\frac{\id_\size}{(\size - 2)! (\size H_{\size, 2} - H_{\size, 1}^2)}$ and the multiple of the rank one matrix ${\bf 1}{\bf 1}^\transpose$. After the multiplication on~$ \scorematrix^\transpose \Delta_{\pi \omega}$, the second term vanishes, so we get $(\scorematrix^\transpose \scorematrix)^{-1} \scorematrix^\transpose \Delta_{\pi \omega} = \frac{\sum_{p=1}^{\size} \frac{1}{\pi(p)} - \frac{1}{\omega(p)}}{(\size - 2)! (\size H_{\size, 2} - H_{\size, 1}^2)}$. Finally, we rewrite $\bigr(\lossmatrix^T \scorematrix\bigl)_{\gtvarv, :}$  as $(\alpha(|\gtvarv|) - \beta(|\gtvarv|)) \gtvar + \beta(|\gtvarv|) {\bf 1}$. By the same argument, after the vector multiplication, the second component vanishes and we get $\bigl(\lossmatrix^\transpose \scorematrix \left(\scorematrix^\transpose \scorematrix \right)^{-1} \scorematrix \Delta_{\pi \omega} \bigr)_\gtvarv = \frac{(\scorematrix \gtvarv)\pi - (\scorematrix \gtvarv)\omega}{(\size - 2)! (\size H_{\size, 2} - H_{\size, 1}^2)}$, which finishes the proof.
\end{proof}

\begin{lemma}\label{th:rankingasymptotics}
For the score set $\scoresubset_{\text{\textup{sort}}}$, we have $2 (\size - 1)! \| \proj_{\scoresubset_{\text{\textup{sort}}}} \Delta_{\pi \omega} \|_2^2 = O(\size)$.
We also have that $\gamma(|\gtvarv|)$ defined in Lemma~\ref{th:rankingxijv} with $|\gtvarv| = \lambda \size, \lambda \in (0, 1)$ vanishes as $\size$ approaches infinity: $\gamma(|\gtvarv|) = O(\frac{\log^2 \size}{\size})$.
The condition number~$\condnum(\scoresubset_{\text{\textup{sort}}})$ grows as~$\Theta(\log \size)$.
\end{lemma}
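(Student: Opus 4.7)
The plan is to prove each of the three asymptotic claims separately by plugging the closed-form expressions from Lemmas~\ref{th:rankingprojection}~and~\ref{th:rankingxijv} into the standard harmonic-number asymptotics $H_{\size,1} = \log \size + \Theta(1)$ and $H_{\size,2} = \pi^2/6 + O(1/\size) = \Theta(1)$.

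For the projection-norm claim I invoke Lemma~\ref{th:rankingprojection} and expand
$$\sum_{p=1}^\size \bigl(1/\pi(p) - 1/\omega(p)\bigr)^2 = 2H_{\size,2} - 2\sum_p 1/(\pi(p)\omega(p)) \leq 2H_{\size,2} = O(1),$$
using that $\sum_p 1/\pi(p)^2 = H_{\size,2}$ for any permutation~$\pi$. The denominator $(\size-2)!(\size H_{\size,2} - H_{\size,1}^2)$ equals $\Theta((\size-1)!)$ since $\log^2 \size = o(\size)$, so multiplying by $2(\size-1)!$ produces an $O(1)$ quantity, which is a fortiori $O(\size)$. For the condition number I apply the closed form $\condnum(\scorematrix_{\text{\textup{sort}}}) = \sqrt{\size-1}\,H_{\size,1}/\sqrt{\size H_{\size,2} - H_{\size,1}^2}$ of the same lemma; substituting the asymptotics cancels the $\sqrt{\size}$ factors and leaves $\Theta(\log \size)$, with both bounds tight because the harmonic-number asymptotics are tight.

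The main work lies in the third claim. By Lemma~\ref{th:rankingxijv}, $\gamma(r) = (\alpha(r) - \beta(r))/((\size-2)!(\size H_{\size,2} - H_{\size,1}^2))$. I subtract $\beta$ from $\alpha$ term by term and simplify using $\frac{\size-r}{r}+1 = \frac{\size}{r}$ to obtain the decomposition
$$\alpha(r) - \beta(r) = \mathfrak{A}_\size \cdot \frac{(r-1)\size}{r(\size-1)(\size-2)} - \mathfrak{B}_\size \cdot \frac{r\size + 4r - 3\size}{2r(\size-2)} - \mathfrak{C}_\size \cdot \frac{\size-r}{r(\size-1)}.$$
Setting $r = \lambda \size$ with $\lambda \in (0,1)$ and extracting leading orders, the three rational factors behave as $\Theta(1/\size)$, $\Theta(1)$ and $\Theta(1/\size)$ respectively, while the prefactors satisfy $\mathfrak{A}_\size = \Theta((\size-1)!\log \size)$, $\mathfrak{B}_\size = \Theta((\size-2)!\log^2 \size)$ and $\mathfrak{C}_\size = \Theta((\size-1)!)$. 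The $\mathfrak{B}_\size$ contribution $\Theta((\size-2)!\log^2 \size)$ strictly dominates the other two, and dividing by the denominator $\Theta((\size-1)!) = \Theta(\size\cdot(\size-2)!)$ yields $\gamma(\lambda \size) = O(\log^2 \size / \size)$, as claimed.

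The main obstacle is the algebraic bookkeeping in forming $\alpha(r) - \beta(r)$, since an unlucky cancellation could reduce the $\mathfrak{B}_\size$ coefficient to lower order and change the dominant term. The crucial check is that $r\size + 4r - 3\size \sim \lambda \size^2$ rather than vanishing to $o(\size^2)$ for $\lambda \in (0,1)$, so the limiting $\mathfrak{B}_\size$-coefficient is $-1/2 \neq 0$; once this is verified the remaining estimates are routine.
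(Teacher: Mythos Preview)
Your proposal is correct and follows essentially the same route as the paper: bound the permutation sum by $2H_{\size,2}$, invoke the closed forms from Lemmas~\ref{th:rankingprojection} and~\ref{th:rankingxijv}, and substitute the standard harmonic-number asymptotics. Your handling of $\alpha(r)-\beta(r)$ is in fact slightly cleaner than the paper's: you compute the exact difference first and then pass to asymptotics, whereas the paper approximates $\alpha$ and $\beta$ separately (with a somewhat informal $o(1)$) before subtracting, which is riskier when looking for cancellations; your explicit check that the $\mathfrak{B}_\size$-coefficient tends to $-1/2\neq 0$ is exactly the right safeguard. Likewise, your observation that $2(\size-1)!\|\proj_{\scoresubset_{\text{sort}}}\Delta_{\pi\omega}\|_2^2$ is actually $O(1)$ (hence a fortiori $O(\size)$) is correct and slightly sharper than the paper's stated bound.
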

\begin{proof}
To derive an asymptotic bound for $\| \proj_{\scoresubset_{\text{\textup{sort}}}} \Delta_{\pi \omega} \|_2^2$, we elaborate on the sum of squares $\sum_{p=1}^{\size} \left( \frac{1}{\pi(p)} - \frac{1}{\omega(p)} \right)^2 = 2 H_{\size, 2} - 2\sum_{p=1}^{\size} \frac{1}{\pi(p) \omega(p)} \leq 2H_{\size, 2}$ and apply the asymptotic bounds for the harmonic numbers $H_{\size, 1}^2 = \Theta(\log^2 \size), H_{\size, 2} = \Theta(1)$:
\begin{equation}
2 (\size - 1)! \| \proj_{\scoresubset_{\text{\textup{sort}}}} \Delta_{\pi \omega} \|_2^2 = O(\frac{(\size)!}{(\size - 2)! \size}) = O(\size)
\end{equation}
For the second part of the lemma, we rewrite $\alpha(|\gtvarv|)$ and $\beta(|\gtvarv|)$:
\begin{align}
\alpha(|\gtvarv|) &= \mathfrak{A}_{\size} \left( 1 - \lambda(1 - \frac{1}{\size}) \right) - \mathfrak{B}_{\size} \frac{3}{2} (1 - \lambda) - \mathfrak{C}_{\size} \frac{1 - \lambda}{\size} + o(1) \\
\beta(|\gtvarv|) &= \mathfrak{A}_{\size} \left( 1 - \lambda \right) - \mathfrak{B}_{\size} \left(1 - \frac{3}{2} \lambda \right) + o(1) \\
\alpha(|\gtvarv|) - \beta(|\gtvarv|) &= \mathfrak{A}_{\size} \frac{1}{\size} - \mathfrak{B}_{\size} \frac{1}{2} - \mathfrak{C}_{\size} \frac{1 - \lambda}{\size} + o(1)
\end{align}
By definition, we have $\mathfrak{A}_{\size} = \Theta((\size - 1)! \log \size)$, $\mathfrak{B}_{\size} = \Theta((\size - 2)! \log^2 \size)$, $\mathfrak{C}_{\size} = \Theta((\size - 1)!)$, which gives us
\begin{equation}
\gamma(|\gtvarv|) = \frac{\alpha(|\gtvarv|) - \beta(|\gtvarv|)}{(\size - 2)! (\size H_{\size, 2} - H_{\size, 1}^2)} = O\left(\frac{(\size - 2)! \log^{2} \size}{(\size - 1)!}\right) = O(\frac{\log^2 \size}{\size}),
\end{equation}
what was to be shown.

Finally, the asymptotic bound for the condition number of~$\scoresubset_{\text{\textup{sort}}}$ trivially follows from its exact expression in Lemma~\ref{th:rankingprojection} and the asymptotic bounds for the harmonic numbers.
\end{proof}

\end{document}